\algnewcommand{\algorithmicforeach}{\textbf{for each}}
\newtheorem{assumption}{Assumption}
\newtheorem{problem}{Problem}
\theoremstyle{definition}
\newcommand{\bigCI}{\perp}
\newcommand{\sainyam}[1]{{#1}}
\newcommand{\nbigCI}{\not\perp}
\newcommand{\Pa}{\texttt{Pa}}
\definecolor{mygreen}{rgb}{0,0.6,0}
\definecolor{mygray}{rgb}{0.5,0.5,0.5}
\definecolor{mymauve}{rgb}{0.58,0,0.82}
\newcommand{\reva}[1]{{#1}}
\newcommand{\revb}[1]{{#1}}
\newcommand{\revc}[1]{{#1}}
\begin{document}
\fancyhead{}

\title{Causal Feature Selection for Algorithmic Fairness}

\author{Sainyam Galhotra}
 \affiliation{
   \institution{University of Chicago}
   \country{}
 }
 \email{sainyam@uchicago.edu}

\author{Karthikeyan Shanmugam}
 \affiliation{
   \institution{IBM Research AI}
   \country{}
 }
 \email{karthikeyan.shanmugam2@ibm.com}

\author{Prasanna Sattigeri}
 \affiliation{
   \institution{IBM Research AI}
   \country{}
 }
 \email{psattig@us.ibm.com}

\author{Kush R. Varshney}
 \affiliation{
   \institution{IBM Research AI}
   \country{}
 }
 \email{krvarshn@us.ibm.com}

\begin{abstract}
The use of machine learning (ML) in high-stakes societal decisions has encouraged the consideration of fairness throughout the ML lifecycle.
Although data integration is one of the primary steps to generate high quality training data, most of the fairness literature ignores this stage. In this work, 
we consider fairness in the integration component of data management, aiming to identify features that improve prediction without adding any bias to the dataset. 
We work under the \emph{causal fairness} paradigm~\cite{capuchin}. Without requiring the underlying structural causal model a priori, we propose an approach to identify a sub-collection of features that ensure fairness of the dataset by performing conditional independence tests between different subsets of features.  We use group testing to improve the complexity of the approach. 
We theoretically prove the correctness of the proposed algorithm and show that sub-linear conditional independence tests are sufficient to identify these variables. A detailed empirical evaluation is performed on real-world datasets to demonstrate the efficacy and efficiency of our technique.
\end{abstract}

\begin{CCSXML}
<ccs2012>
   <concept>
       <concept_id>10010147.10010178.10010187.10010192</concept_id>
       <concept_desc>Computing methodologies~Causal reasoning and diagnostics</concept_desc>
       <concept_significance>500</concept_significance>
       </concept>
   <concept>
       <concept_id>10003752.10010070.10010071</concept_id>
       <concept_desc>Theory of computation~Machine learning theory</concept_desc>
       <concept_significance>500</concept_significance>
       </concept>
 </ccs2012>
\end{CCSXML}

\ccsdesc[500]{Theory of computation~Machine learning theory}

\keywords{Causal fairness, feature selection, fair machine learning}
\maketitle
\section{Introduction\label{sec:intro}}
Algorithmic fairness is of great societal concern when supervised classification models are used to support allocation decisions in high-stake applications. There have been numerous recent advances in statistically and causally defining group fairness between populations delineated by protected attributes and in the development of algorithms to mitigate unwanted bias \citep{BarocasHN2020}.\footnote{We use the terms \emph{sensitive attribute} and \emph{protected attribute} interchangeably.} Bias mitigation algorithms are often categorized into pre-processing, in-processing, and post-processing approaches. Pre-processing techniques modify the distribution of the training data, in-processing techniques modify the objective function of the training procedure or consider additional constraints in the learning phase, and post-processing techniques modify the output predictions --- all in service of improving fairness metrics while upholding classification accuracy \citep{DalessandroOL2017}.  Table~\ref{tab:techniques} summarizes a representative set of prior bias mitigation algorithms. 
{\small
\begin{table}[]
\begin{center}
 \begin{tabular}{|c|c| c|} 
 \hline
  & Associational & Causal  \\ [0.5ex] 
 \hline
 Pre/Post-processing & \cite{calmon2017optimized,feldman2015certifying,kamiran2012data} & \cite{capuchin,chiappa2019path,jiang2019wasserstein}  \\ 
 \hline
 In-processing &\cite{kamishima2012fairness,zafar2017fairness,calders2010three,celis2019classification,hardt2016equality}&\cite{nabi2018fair,russell2017worlds}\\
 \hline
 Feature Selection &&\\
  Discard biased attributes & - & \textbf{This paper}  \\
 \hline
\end{tabular}
\end{center}
\caption{Different categories of fairness techniques.}
\vspace{-11mm}
    \label{tab:techniques}
\end{table}
}
However, this categorization misses an important stage in the lifecycle of machine learning practice: data collection, engineering and management \citep{SchelterHKS2019,JoG2020}. \citet{HolsteinWDDW2019} report that practitioners ``typically look to their training datasets, not their ML models, as the most important place to intervene to improve fairness in their products''.
Data integration, one of the first components of data management, aims to join together information from different sources that captures rich context and improves predictive ability. With the phenomenal growth of digital data, ML practitioners may procure features from millions of sources spanning data lakes, knowledge graphs, etc~ \citep{miller2018open,galhotraautomated}. They typically generate exhaustive sets of features from all sources and then perform subset selection~\citep{zhang2016materialization,konda2013feature,galhotraautomated}. \revb{Feature selection is a promising direction for fairness in ML as it does not require assumptions about data distribution and is robust to distribution shifts~\cite{diazautomated}, assuming distribution shifts do not change the structural aspects of the causal model.} Some may argue that data integration is a part of pre-processing but we make this distinction as data integration does not involve modification of the data distribution and is considered as the task of a data engineer as opposed to a data modeler.

Filtering methods for feature selection exploit the correlation of features to identify a subset~\citep{hall1999correlation}. However, these techniques are ignorant of sensitive attributes and fairness concerns. For example, consider a dataset with features $F_1$ and $F_2$ such that $F_1$ provides slightly more improvement in accuracy than $F_2$; however, incorporating $F_1$ yields a classifier that reinforces discrimination against protected groups whereas incorporating $F_2$ yields a classifier with similar outcomes for different groups. Feature selection techniques that are not discrimination-aware will prefer $F_1$ to $F_2$, but $F_2$ is a better feature to select from a societal perspective.  

{To overcome the fairness limitations of standard feature selection methods, we study the problem of fair feature selection, specifically in the context of data integration when we are integrating new tables of features with an existing training dataset (PK-FK joins) or source selection or generating new features using transformations~\cite{khurana2016cognito,diazautomated,galhotraautomated}.} \textit{Our goal is to identify a subset of new features\footnote{{Our algorithms do not assume that all features are presented a priori and works in case new features are  added incrementally.}} that can be integrated with the original dataset without worsening its biases against protected groups.} As an additional advantage,  the feature selection paradigm is known to be stable against changes in data distribution as compared to prior techniques that modify the output predictions or the data distribution to mitigate bias~\citep{singh2019fair}. Following the framework of prior fair algorithms~\cite{capuchin,chiappa2018causal,chiappa2019path}, we assume access to protected/sensitive attributes which are used to identify the feature subset that obeys fairness. 
The identification of features that do not induce additional bias is tricky because of relationships between non-protected attributes and protected ones that allow the reconstruction of information in the protected attributes from one or more non-protected ones. For example, zip code can reconstruct race  information~\citep{zip}.

There are two main types of techniques to ensure fairness in data: Associational and Causal (summarized in Table~\ref{tab:techniques}). Associational techniques look for associative relationships between sensitive attributes and the prediction outcome to mitigate unwanted biases. However, these techniques are based on correlation between attributes and fail to capture causal relationships.
There has been a lot of interest in studying causal frameworks~\citep{chiappa2019path, xu2019achieving, KusnerLRS2017, jiang2019wasserstein, chiappa2018causal, KilbertusRPHJS2017,ZhangB2018a,ZhangB2018b,khademi2019algorithmic,khademi2019fairness, russell2017worlds} to achieve fairness.  Due to their ability to distinguish different discrimination mechanisms,  we use \emph{causal fairness}~\citep{capuchin,loftus2018causal} as our fairness framework.  Certain causal approaches assume access to the underlying causal structure, which is unrealistic in practice~\cite{chiappa2019path, xu2019achieving, KusnerLRS2017, jiang2019wasserstein, chiappa2018causal, KilbertusRPHJS2017,ZhangB2018a}. \emph{Importantly, we do not make the assumption that we are given the causal graph (formally, the structure of the causal bayesian network that generates the data) a priori.}

We propose an algorithm \texttt{SeqSel} to identify all new features that when added to the original dataset still ensure causal fairness. Our algorithm takes as input a dataset $D$ comprising an outcome variable, sensitive features, admissible features, and a collection of features that are neither admissible nor sensitive. A feature is considered \emph{admissible} if the protected variables are allowed to affect the outcome through it. \reva{For example, consider a credit card application system that contains gender and race as sensitive attributes, expected monthly usage as an admissible attribute (it may have a sensitive attribute as one of its parent but it is permissible for the sensitive attribute to influence the outcome through this variable), and age and education level as variables which are neither sensitive nor admissible.} A set of features  $\mathbf{X}$ is considered to ensure causal fairness  if after adding these features one could increase accuracy of a subsequently trained classifier on this new dataset without worrying about causal fairness metrics, i.e. in effect the subset of features when added does not introduce any tradeoff between fairness and accuracy and they are safe to subsequent attempts at building a purely predictive classifier. Our approach operates in two phases focused towards performing conditional independence tests  with respect to the sensitive attributes and the target variable. These tests help identify variables that (1) do not capture information about sensitive attributes, or (2) ensure fairness even if they capture some information about sensitive attributes.  
We theoretically prove that both types of these variables ensure causal fairness and analyze the conditions to identify all such variables.

The na{\"\i}ve \texttt{SeqSel} algorithm performs a number of conditional independence tests that grows linearly in the number of features in the dataset. One of the major shortcomings of extant conditional independence testing methods is that they generate spurious correlations between variables if too many tests are performed~\citep{strobl2019approximate}.  To overcome this limitation and reduce the chances of getting spurious results, we propose a more efficient algorithm, \texttt{GrpSel}, that uses graphoid axioms to show that \emph{group testing} can reduce the number of tests to the logarithm of the number of features and additionally  improves the overall efficiency of the pipeline. 

  Our primary contributions are:
  \begin{itemize}
      \item We formalize the problem of fairness in data integration and feature selection setting using causal fairness.
      \item We provide an algorithm that performs conditional independence tests to identify the variables that do not worsen the fairness of the dataset.
      \item We prove theoretical guarantees that the variables identified by our algorithm ensure fairness and identify a closed form expression for variables that cannot be added.
      \item We propose an improved algorithm that leverages ideas of \emph{group testing} to reduce the chances of getting spurious correlations and has sub-linear complexity.
      \item We show empirical benefits of our techniques on synthetic and real-world datasets.
  \end{itemize}
The paper represents a principled use to address an important problem that has not been addressed before: fair data integration.

\section{Preliminaries\label{sec:prelim}}
In this section, we review the background on algorithmic fairness and models of causality.

We denote variables (also known as dataset attributes or features) by uppercase letters like $X,S,A$, corresponding values in lower case like $x,s,a$\reva{, and sets of attributes or values in bold ($\mathbf{X}$ or $\mathbf{x}$).  }

\subsection{Algorithmic Fairness\label{sec:fairdef}}
\reva{The area of algorithmic fairness aims to ensure unbiased output for different sub-groups identified by specific set of attributes (also known as protected or sensitive attributes). For example, a loan prediction software should not discriminate against female applicants (gender is the protected attribute). }
The literature on algorithmic fairness considers  a set of protected attributes $\mathbf{S}=\{S_1,\ldots, S_{|\mathbf{S}|}\}$, a target variable $Y$ and a prediction algorithm $f:\mathbf{V}\rightarrow Y$ where $\mathbf{V}$ denotes the set of input attributes and the output of $f$ is called the prediction output or an outcome. \reva{Typically, ML tasks train a classifier on a dataset $D$ (comprising of attributes $\mathbf{V}$ and target $Y$) which is assumed to be distributed according to a distribution $\Pr$.}  In order to measure the fairness of $f$ \reva{with respect to $\mathbf{S}$}, two different types of metrics have been studied: Associational and Causal. 

\noindent \textbf{Associational fairness} methods capture statistical variabilities in the behavior of the prediction algorithm for different groups of individuals. For example, equalized odds requires that
the false positive and true positive rate of different sub-groups identified by the sensitive attributes is the same. Other associational fairness measures include Demographic parity, conditional statistical parity, and predictive parity~\cite{kamishima2012fairness,zafar2017fairness,calders2010three,celis2019classification,hardt2016equality,calmon2017optimized}. 
Even though associational methods of quantifying fairness are very popular, all these methods fail to distinguish
between causal influence and spurious
correlations between different input attributes
of the prediction algorithm~\cite{capuchin}. To this end, recent methods have proposed to capture the causal dependence of the outcome on the protected attribute. Before describing these methods, we present a background on causal graphs.

\subsection{Causal DAGs\label{sec:causal}}

\noindent \textbf{Probabilistic Causal DAG.} A causal DAG over a set of variables $\mathbf{V}$ is a directed acyclic graph $G$ that captures \revc{functional dependencies} between these variables. A variable $X_1$ is considered to cause $X_2$ iff $X_1\rightarrow X_2$ in the causal DAG $G$.  Each variable in the causal graph $G$ is functionally determined by its parents and some unobserved exogenous variables. The causal graph is used as a compact representation to denote the dependence between different variables. 
\reva{Two variables $X$ and $Y$ are independent when conditioned on $Z$ if $\Pr(Y=y|X=x,Z=z)=\Pr(Y=y|X=x)$ and is denoted by $X\bigCI Y |_{\Pr} Z$. }
To test this condition, \reva{we consider a conditional independence (CI) test~\cite{strobl2019approximate}} that returns if $X$ and $Y$ are independent conditioned on $Z$. An orthogonal line of work has studied different techniques to efficiently test this condition~\cite{strobl2019approximate}. \revc{The joint probability distribution of a set of variables $\mathbf{V}$ can be decomposed similar to that of bayesian networks, 
{\small
\begin{align}
    \Pr(\mathbf{V})&=\prod_{X\in \mathbf{V}} \Pr(X| \Pa(X)),
\end{align}}
where $\Pa(X)$ denotes the set of parents of $X$ in the graph $G$.}

\noindent \reva{ \textbf{d-separation and Faithfulness} One of the common questions that are answered using causal DAGs is whether $\mathbf{X}\bigCI \mathbf{Y} | \mathbf{Z}$, i.e. a set of variables $\mathbf{X}$ is independent of $\mathbf{Y}$, conditioned on $\mathbf{Z}$. d-separation between three sets of variables $\mathbf{X},\mathbf{Y},\mathbf{Z}$, denoted by $\mathbf{X}\bigCI \mathbf{Y} |_d \mathbf{Z}$, is a sufficient graphical criterion that syntactically captures observed conditional independencies. $\mathbf{X}$ and $\mathbf{Y}$ are said to be d-separated given $\mathbf{Z}$, if all paths between  $\mathbf{X}$ and $\mathbf{Y}$ are blocked by $\mathbf{Z}$ (Please refer to the full version~\cite{fullversion} for a formal definition of blocking and d-separation). Probability distribution of a dataset $D$ is said to be markov compatible~\cite{pearl2009causality} if d-separation implies CI with respect to the probability  distribution $\Pr$. If the converse also holds ($X\bigCI Y |_{\Pr}Z \implies X\bigCI Y |_{d}Z$ ), the probability distribution $\Pr$ is considered faithful to the causal graph $G$~\cite{peters2017elements}. We assume throughout this work that $\Pr$ is markov compatible and faithful to $G$. As CI and d-separation are equivalent under these assumptions, we ignore the sub-script $\Pr$ or $d$ in subsequent discussions. Faithfulness is a standard assumption in causal inference, which ensures that all CI observed in the dataset correspond to d-separations in the corresponding causal graph~\cite{peters2017elements,lauritzen2018unifying,pearl2009causality}.   Graphoid axioms~\cite{sadeghi2017faithfulness,lauritzen2018unifying,pearl2009causality} are the popular set of properties that are used to infer conditional independence. We list two axioms that are relevant for this study.
\begin{lemma}[Theorem 1~\cite{lauritzen2018unifying}]
Consider a dataset $D$ with a causal graph $G$, where the data distribution $\Pr$ is faithful to the graph $G$. 
\begin{enumerate}
    \item \textbf{Decomposition axiom}: If $A\bot B,C|Z$, then $A\bot B | Z$ and $A\bot C | Z $ 
    \item \textbf{Composition axiom}: If $A\bot B | Z$ and $A\bot C | Z $, then $A\bot B,C | Z$
\end{enumerate}
\end{lemma}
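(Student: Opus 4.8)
The plan is to reduce both claims to graphical statements about blocked paths, exploiting the standing assumption that $\Pr$ is Markov compatible with and faithful to $G$, so that conditional independence and d-separation coincide: $A\perp B\mid Z \Leftrightarrow A\perp_d B\mid Z$ for any disjoint sets $A,B,Z$. Under this equivalence each axiom becomes a purely combinatorial fact about $G$, which is where the argument reads most cleanly. Throughout I would keep in mind that $A,B,C,Z$ may denote \emph{sets} of variables, so a ``path from $A$ to $B$'' means a path between some node of $A$ and some node of $B$.

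For the \emph{decomposition} axiom I would first note that it actually holds for every distribution, so faithfulness is not really required here. The direct route is by marginalization: $A\perp\{B,C\}\mid Z$ says $\Pr(A,B,C\mid Z)=\Pr(A\mid Z)\,\Pr(B,C\mid Z)$, and summing (or integrating) both sides over the values of $C$ yields $\Pr(A,B\mid Z)=\Pr(A\mid Z)\,\Pr(B\mid Z)$, i.e.\ $A\perp B\mid Z$; the statement $A\perp C\mid Z$ follows by the symmetric marginalization over $B$. Equivalently, graphically, every path from $A$ to $B$ is in particular a path from $A$ to $B\cup C$, so if $Z$ blocks all paths from $A$ to $B\cup C$ it blocks all paths from $A$ to $B$ (and likewise to $C$), giving $A\perp_d B\mid Z$ and $A\perp_d C\mid Z$.

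The \emph{composition} axiom is where the real content lies, because the probabilistic statement is false in general: the standard counterexample has $B,C$ independent fair bits with $A=B\oplus C$, so that $A\perp B\mid Z$ and $A\perp C\mid Z$ (with $Z$ empty) yet $A$ is a deterministic function of $(B,C)$. Hence faithfulness is genuinely needed. Here I would translate the two hypotheses through faithfulness into $A\perp_d B\mid Z$ and $A\perp_d C\mid Z$, meaning $Z$ blocks every path from $A$ to $B$ and every path from $A$ to $C$. Any path from $A$ to the combined set $B\cup C$ terminates at a node lying in $B$ or in $C$, hence is already one of the paths accounted for and is therefore blocked by $Z$; thus $Z$ d-separates $A$ from $B\cup C$, that is $A\perp_d\{B,C\}\mid Z$. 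Finally, Markov compatibility converts this d-separation back into $A\perp\{B,C\}\mid Z$, completing the proof.

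The main obstacle is conceptual rather than computational: recognizing that composition is not a semi-graphoid property and so cannot be recovered by the elementary marginalization or factorization manipulations that settle decomposition. The entire weight of the argument rests on the two-way equivalence between CI and d-separation furnished by the Markov-plus-faithfulness assumption --- faithfulness supplies the direction CI $\Rightarrow$ d-separation used on the hypotheses, and Markov compatibility supplies d-separation $\Rightarrow$ CI used on the conclusion. Once this bridge is in place, the remaining path-blocking bookkeeping is routine, the only care needed being the set-valued reading of ``path to $B$'' noted above.
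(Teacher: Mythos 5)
Your proposal takes the same route as the paper --- under the standing Markov-plus-faithfulness assumption, both axioms reduce to path-blocking statements about d-separation --- but it is strictly more complete than the paper's own proof. The paper only carries out the decomposition half (every path from $A$ to $B \subseteq B\cup C$ is in particular a path from $A$ to $B\cup C$, hence blocked) and never actually argues the composition axiom, even though composition is the half with real content; you supply the missing argument: any path from $A$ to $B\cup C$ terminates at a node of $B$ or of $C$, hence is already blocked under the two hypotheses, and the Markov property converts the resulting d-separation $A \perp_d B,C \mid Z$ back into conditional independence. Your two side observations also sharpen the statement in ways the paper does not: decomposition is a semi-graphoid property valid for \emph{every} distribution (your marginalization argument), so faithfulness is superfluous there, whereas composition genuinely requires faithfulness, as your counterexample ($A = B\oplus C$ with $B,C$ independent fair bits) shows. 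You are likewise correct, and more careful than the paper, in tracking which direction of the CI/d-separation equivalence is used where: faithfulness (CI implies d-separation) on the hypotheses, Markov compatibility (d-separation implies CI) on the conclusion. One small point worth making explicit: a path from $A$ ending at a node of $B$ may pass \emph{through} nodes of $C$, but it still counts as a path from $A$ to $B$ for d-separation purposes, since blocking depends only on $Z$, colliders, and their descendants, not on which other nodes the path happens to traverse; this is the only spot where the set-valued reading could trip the composition argument, and your proof implicitly relies on it.
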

\begin{proof}
We use the notion of d-separation to prove these results. 

\noindent \emph{Decomposition axiom:} If $A\bot B,C|Z$, then all paths from $A$ to any of $B$ or $C$ are blocked given $Z$. Therefore, any path from $A$ to $B\subseteq B\cup C$ is also blocked given $Z$. Therefore, $A\bot B|Z$. Symmetrically, the same argument proves that $A\bot C|Z$. Therefore, $A\bot B,C |Z$. 
\end{proof}
}

\noindent \textbf{\texttt{do}-operator.}  \reva{Pearl~\cite{pearl2009causality} defined intervention as a modification of the state of attributes to a specific value and observe its effect.} An intervention on an attribute $X\leftarrow x$ is equivalent to assigning a value $x$ to the variable $X$ \revc{in a modified causal graph $G'$, where $G'$ is same as $G$ except that all incoming edges of $X$ have been removed.}
According to Pearl~\cite{pearl2009causality}, do-operator is equivalent to the graphical interpretation of an intervention. An intervention $\texttt{do}(X)=x$ is equivalent to conditioning $X=x$ if $X$ has no ancestors in $G$. 

\subsection{\reva{Causal Fairness\label{sec:causalfair}}}
\reva{There has been a lot of recent interest in studying the causal impact of protected attributes on the prediction variable. } 
\reva{Causal measures capture the causal dependence of the prediction variable on the sensitive attributes and aim to minimize such effects at different population levels. 
}

\noindent \reva{\textbf{Admissible Attributes.} In an ideal setting, the prediction attribute and the protected attributes should be d-separated in the causal graph whenever we intervene on the protected attributes. However, it is a very restrictive and impractical requirement~\cite{capuchin}. To improve the usefulness of this definition, a subset of the attributes are labelled admissible, through which protected attribute is allowed to impact the prediction attribute. For example, applicant's choice of loan type or loan duration in a banking application.
The set of admissible attributes also help to understand the impact of different attributes on the prediction accuracy and fairness. The specification of attributes as admissible is application-dependent and are considered as an input to the problem.
}

One of the recent causal fairness definitions, interventional fairness~\cite{capuchin} is the strongest notion of fairness that is testable over the input dataset and correctly captures group level fairness. It assumes that the input attributes $\mathbf{V}$ consist of  admissible attributes $\mathbf{A}$, through which the sensitive attributes are allowed to influence the prediction output.  \revc{The fairness definition in~\cite{capuchin} was designed to study datasets and focused on the target attribute $Y$. We extend this definition to analyze fairness of ML classifiers by analyzing the effect of sensitive attributes on $Y'$, the prediction output.}

\begin{definition}[Causal Fairness]
\label{def:causal-fair1}
For a given set of admissible variables, $\mathbf{A}$, a classifier is considered fair if for any collection of values $a$ of $\mathbf{A}$ and output $Y'$, the following holds: $Pr(Y'=y|\texttt{do}(\mathbf{S})=\mathbf{s},\texttt{do}(\mathbf{A}=\mathbf{a}) ) = Pr(Y'=y|\texttt{do}(\mathbf{S})=\mathbf{s}',\texttt{do}(\mathbf{A}=\mathbf{a}) )$ for all values of $\mathbf{A}$, $\mathbf{S}$ and ${Y'}$.
\end{definition}
\begin{example}
Consider a loan prediction software~\cite{german} that considers demographic attributes along with credit information and loan preferences. Among input attributes, race and gender are considered protected and loan preferences like loan type and duration are generally considered admissible because any bias due to sensitive attributes is allowed to affect the outcome only if it is through individual's preferences. Other attributes like age, zip-code, income, education, etc are considered neither admissible nor inadmissible. 

\end{example}
In this dataset, some features like zip-code have been identified as proxy features which are causally dependent on race. Using any of these proxy features for classifier training can inject bias into the system. 
\reva{According to Definition~\ref{def:causal-fair1}, the protected attributes $\mathbf{S}$ are independent of $Y'$ conditioned on $\mathbf{A}$ in the intervened graph (incoming edges of $\mathbf{S}$ and $\mathbf{A}$ are removed), say $G'$. In other words, $\mathbf{S}$ and $Y'$ are d-separated conditioned on $\mathbf{A}$ in $G'$. For more insights about the definition of causal fairness, we refer the reader to~\cite{capuchin}. Recent work has also studied causal fairness in settings where the protected attribute is unobserved~\cite{galhotra2021interventional}.}

\noindent \revc{\textbf{Testing causal fairness.} Causal fairness is an interventional definition that is represented using \texttt{do} operators. A straightforward way to test this definition is to leverage a fully specified causal graph (graph structure and equations) to estimate the post-intervention probability values. However, fully specified causal graphs are not available in practice and this definition can not be tested as is. Instead, we present a sufficient condition to test for causal fairness.
\begin{lemma}
If conditional-mutual information between the classifier output $Y'$ and protected attributes $\mathbf{S}$ is zero when conditioned on the admissible set $\mathbf{A}$, i.e., $I(Y',\mathbf{S} | A=\mathbf{a}) =0$ then $Y'$ is causally fair.\label{lem:cmi}
\end{lemma}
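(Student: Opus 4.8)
The plan is to move from the information-theoretic hypothesis to a purely graphical statement, and then transfer that graphical statement from the observational graph $G$ to the intervened graph $G'$ in which all incoming edges of $\mathbf{S}$ and $\mathbf{A}$ are deleted. First I would use the elementary fact that a conditional mutual information vanishes exactly when the corresponding conditional independence holds, so $I(Y',\mathbf{S}\mid\mathbf{A})=0$ yields $Y'\bigCI \mathbf{S}\mid\mathbf{A}$ under $\Pr$. Invoking the standing faithfulness assumption, this conditional independence in the distribution is equivalent to the d-separation $Y'\bigCI \mathbf{S}\mid\mathbf{A}$ in $G$.

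The core step is to show that this d-separation is preserved when we pass from $G$ to $G'$. The key observation is that $G'$ is obtained from $G$ only by deleting edges, so the edge set of $G'$ is contained in that of $G$. Consequently every trail between $\mathbf{S}$ and $Y'$ in $G'$ is also a trail in $G$ using the same edges with the same orientations, and every descendant relation in $G'$ is a descendant relation in $G$. I would then argue path by path: fix any trail $p$ between $\mathbf{S}$ and $Y'$ that is present in $G'$; since $p$ is also present in $G$ and $G$ d-separates $\mathbf{S}$ from $Y'$ given $\mathbf{A}$, the trail $p$ is blocked in $G$. For a non-collider on $p$ the blocking condition is membership in $\mathbf{A}$, which is unchanged; for a collider on $p$ the orientation of the two incident edges is identical in $G$ and $G'$, so its collider status is preserved, and its descendant set in $G'$ is a subset of its descendant set in $G$, so if no descendant lay in $\mathbf{A}$ in $G$ then none does in $G'$. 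Either way the same node that blocks $p$ in $G$ still blocks $p$ in $G'$, so every trail in $G'$ is blocked and $Y'\bigCI \mathbf{S}\mid\mathbf{A}$ holds in $G'$.

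Finally, the excerpt records that $Y'$ is causally fair precisely when $\mathbf{S}$ and $Y'$ are d-separated given $\mathbf{A}$ in $G'$. Since the interventional distribution induced by $\texttt{do}(\mathbf{S})$ and $\texttt{do}(\mathbf{A})$ factorizes according to $G'$, Markov compatibility of $G'$ turns the d-separation established above into $\Pr(Y'=y\mid \texttt{do}(\mathbf{S})=\mathbf{s},\texttt{do}(\mathbf{A})=\mathbf{a})=\Pr(Y'=y\mid \texttt{do}(\mathbf{A})=\mathbf{a})$ for every $\mathbf{s}$. As the right-hand side does not depend on $\mathbf{s}$, evaluating at $\mathbf{s}$ and at $\mathbf{s}'$ gives the equality demanded by Definition~\ref{def:causal-fair1}.

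I expect the main obstacle to be the middle step, making the d-separation transfer airtight. The delicate point is that an intervention could in principle change which nodes act as open colliders; the argument hinges on the fact that edge deletion can only shrink the collection of trails and the descendant sets, never open a previously blocked trail, so blocking is monotone under the intervention. I would also be careful to read the hypothesis as the full conditional independence $Y'\bigCI\mathbf{S}\mid\mathbf{A}$ rather than independence fixed at a single value $\mathbf{a}$, since it is faithfulness at the level of the distribution that licenses the passage to d-separation.
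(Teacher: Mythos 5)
Your argument is sound, but the most important thing to know is that the paper never actually proves Lemma~\ref{lem:cmi}: it is stated as a sufficient condition for testing causal fairness and left without proof, so there is no official derivation to match. Your chain --- vanishing conditional mutual information gives $Y'\bigCI \mathbf{S}\mid \mathbf{A}$ in $\Pr$; faithfulness lifts this to d-separation in $G$; monotonicity of d-separation under edge deletion transfers it to the mutilated graph $G'$; and the Markov property of the truncated interventional factorization (together with the fact that in $G'$ the root nodes $\mathbf{S},\mathbf{A}$ can be conditioned on rather than intervened on) yields $\Pr(Y'\mid\texttt{do}(\mathbf{S})=\mathbf{s},\texttt{do}(\mathbf{A})=\mathbf{a})=\Pr(Y'\mid\texttt{do}(\mathbf{A})=\mathbf{a})$ --- is correct, and your middle step is exactly the right monotonicity fact: deleting edges can only remove trails and shrink descendant sets, so blocked trails stay blocked. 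This is also a genuinely different style from how the paper treats the adjacent results, Lemmas~\ref{lem:c1} and~\ref{lem:c2}, which are proved by algebraic \texttt{do}-calculus decompositions supported by Lemmas~\ref{lem:thirdrule} and~\ref{lem:yprime} of the appendix rather than by path-level graphical reasoning. What your route buys is transparency about assumptions: it makes explicit that the passage from observational independence to interventional invariance genuinely requires faithfulness (with exact path cancellation the lemma is false), something the paper's informal identification of Definition~\ref{def:causal-fair1} with d-separation in $G'$ obscures. Two caveats you should record explicitly: Assumption~\ref{faithfulness} is stated only for $G$ over $\mathbf{V}$, whereas your faithfulness step is applied to the augmented graph containing the constructed node $Y'$, so faithfulness of $Y'$'s manufactured mechanism must be taken as part of the paper's modeling of $Y'$ as a new child node; and your reading of the hypothesis as full conditional independence (for all $\mathbf{a}$, not a single value) is the correct and necessary one, since faithfulness operates at the level of the whole conditional distribution --- you were right to flag the paper's notation $I(Y',\mathbf{S}\mid A=\mathbf{a})$ as ambiguous on this point.
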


}

\begin{figure*}
\vspace{-5pt}
\subfigure[$X_2$  is a biased variable\label{fig:diag1}]{\includegraphics[width=0.26\textwidth]{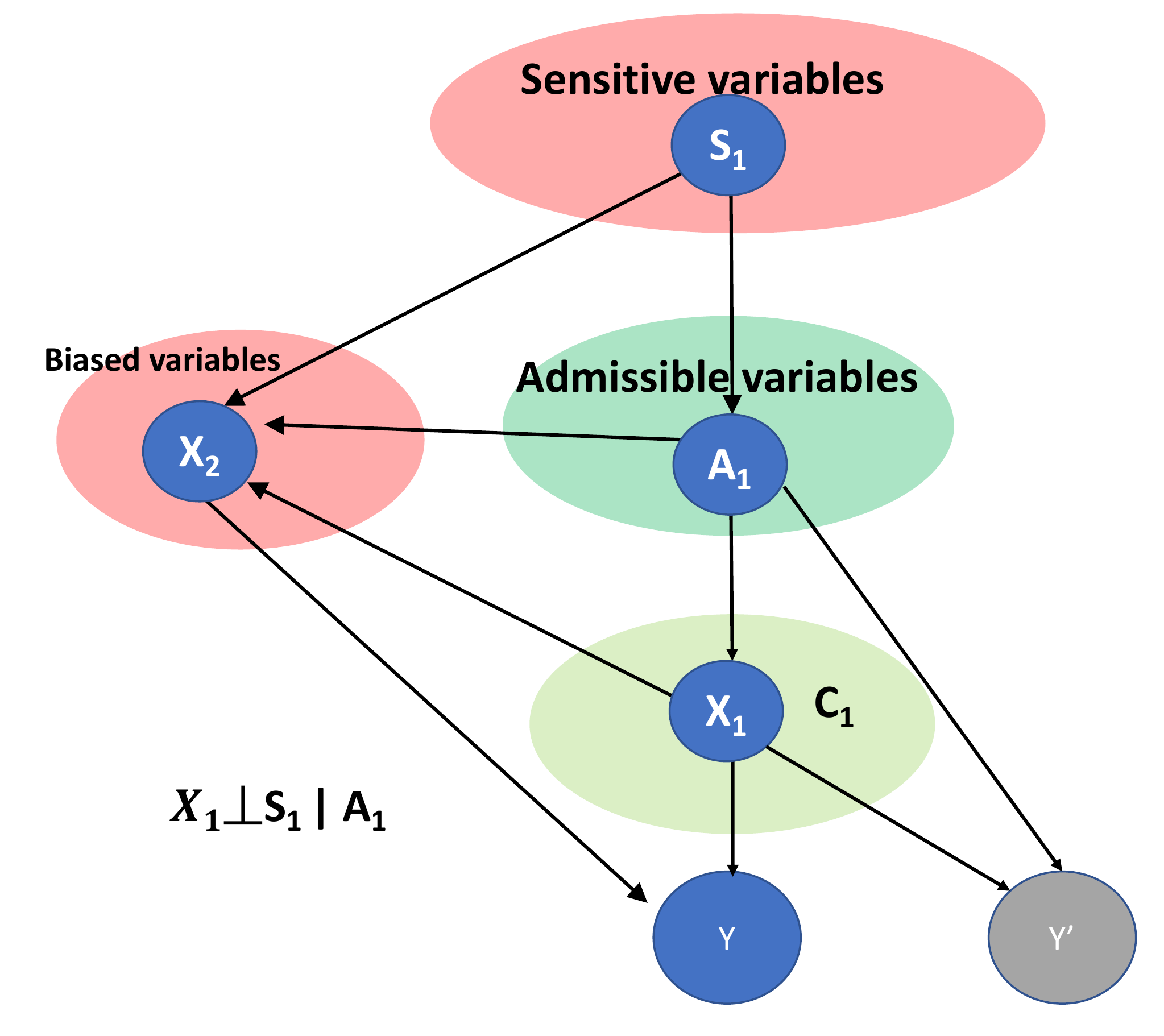}}
\subfigure[$X_1,X_2, X_3$ ensure fairness\label{fig:diag2}]{\includegraphics[width=0.28\textwidth]{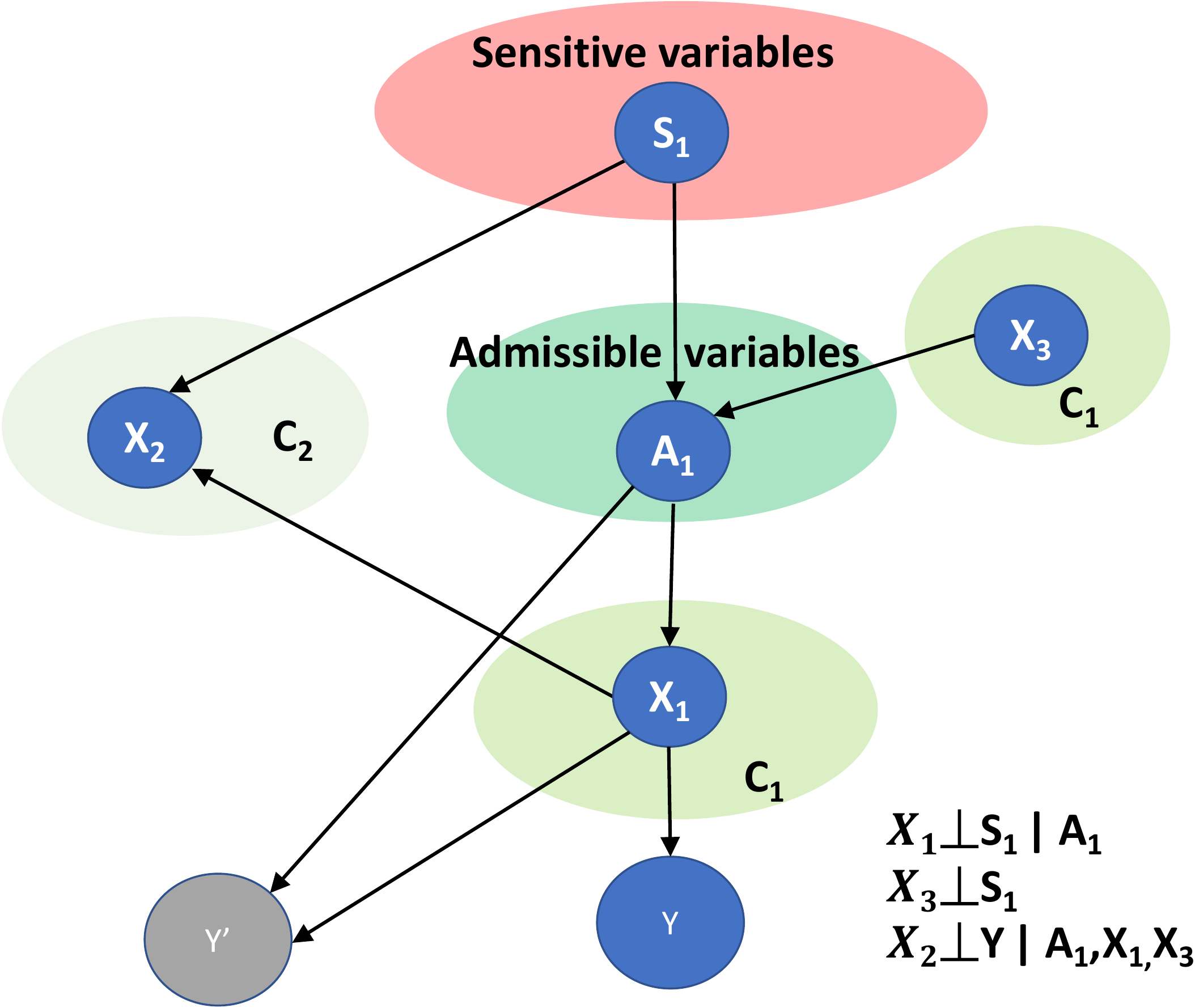}}
\subfigure[$X_1,X_2, X_3$ ensure fairness\label{fig:diag3}]{\includegraphics[width=0.29\textwidth]{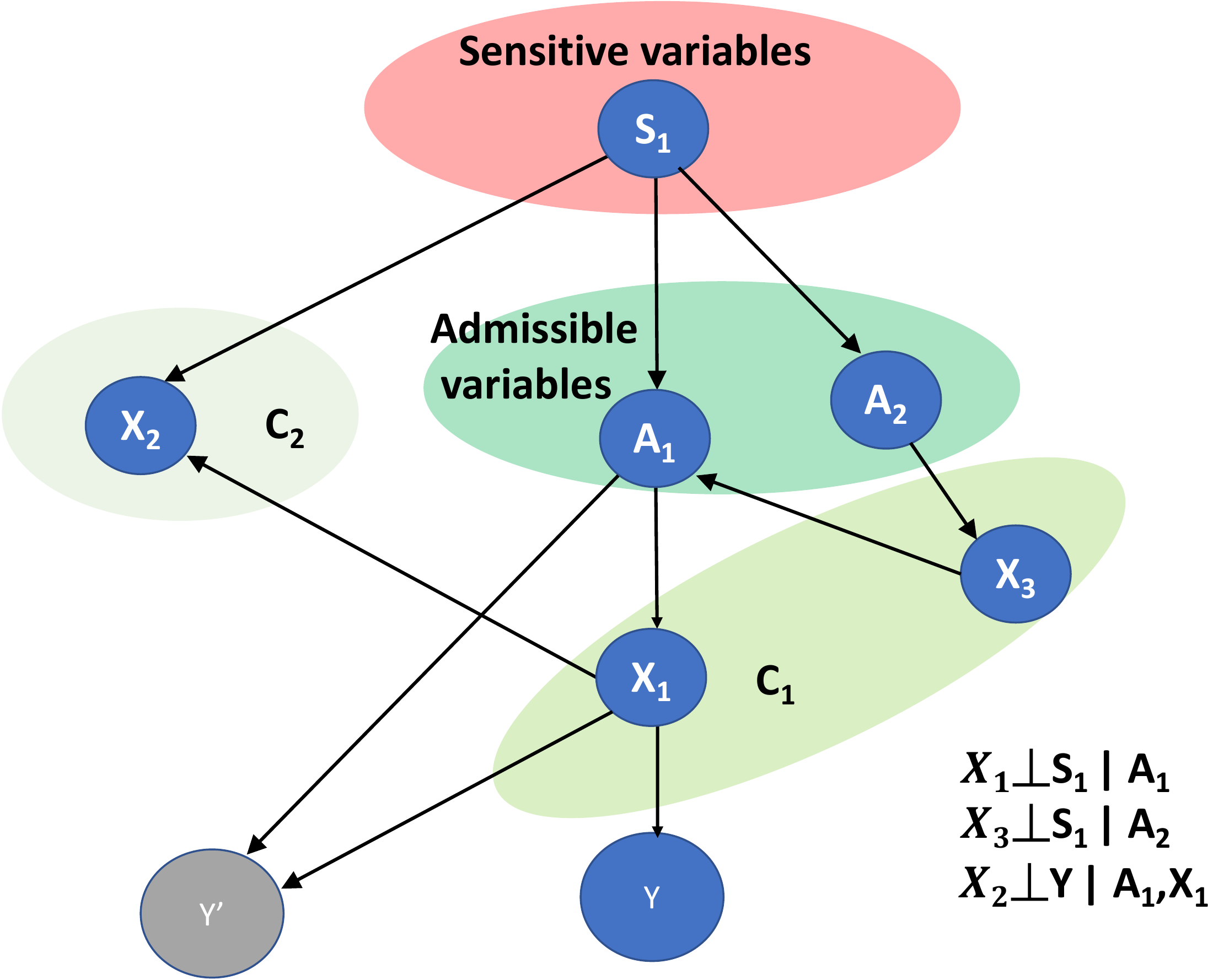}}
\vspace{-5mm}
\caption{Example causal graphs that demonstrate different types of variables. 
\vspace{-5mm}
\label{fig:fairjoin}}
\end{figure*}
\vspace{-3mm}
\section{Problem Statement\label{sec:problem}}
In this section, we define the problem of feature selection to ensure interventional fairness and provide high level intuition of the involved challenges.

Consider a dataset $D$ comprising of a disjoint set of two types of features (i)  Sensitive $\mathbf{S}=\{S_1,\ldots, S_{|\mathbf{S}|}\}$ and (ii) Admissible  $\mathbf{A}=\{A_1,\ldots, A_{|\mathbf{A}|}\}$ along with a target variable $Y$.  Let $\mathbf{X}= \{X_1,\ldots, X_n\}$ denote the collection of $n$ features that are neither admissible nor sensitive and can be added to $D$ by performing a join between the input dataset and different datasets from different sources or by feature transformation over a subset of the features.  Let $\mathbf{V} = \mathbf{A}\cup \mathbf{S} \cup \mathbf{X} \cup Y$ denote the exhaustive list of available variables and $Y'$ denote the learnt target variable which has been trained over a subset $\mathbf{T}\subseteq\mathbf{V}$. 
 Now, we present the definition of causally fair features that can be added to the original dataset.

\begin{definition}[Causally Fair Features]
\label{def:causal-fair}
For a given set of admissible variables, $\mathbf{A}$, we say a collection of features $D=\mathbf{A}\cup \mathbf{T}$ is causally fair if the bayes optimal predictor $Y'$, trained on $D$ satisfies causal fairness with respect to sensitive attributes $\mathbf{S}$.
\end{definition}

The goal is to identify the largest subset $\mathbf{T}\subseteq\mathbf{V}$ such that the  variable $Y'$, trained using these variables is fair. 

\begin{problem}\label{prob:fair}
Given a dataset $D=\{\mathbf{A},\mathbf{S},Y\}$ and a collection of  variables $\mathbf{X}$, identify the largest subset $\mathbf{T}\subseteq \mathbf{X}$ such that the  features  $D'=\mathbf{A}\cup  \mathbf{T}$ is causally-fair.
\end{problem}

The goal of our problem is to identify all features
that can be considered for training a classifier without  worsening the fairness of the dataset $D$.  Note that $D$ contains only features $\mathbf{S} \cup  \mathbf{A}$ to begin with, so there is no fairness violation as sensitive attributes are allowed to influence $Y'$ through $\mathbf{A}$ and $\mathbf{S}$ are not used for  training. We make the following assumptions about the causal graph:
\begin{assumption}[Faithfulness assumption]
 The causal graph $G$ on $\mathbf{V}$ is faithful to the observational distribution on $\mathbf{V}$.\label{faithfulness}
\end{assumption}
This assumption implies that if two variables $A$ and $B$ are connected in the causal graph, the data cannot result in any spurious conditional independency of the form $(A\perp B |\mathbf{C})$ for any subset $\mathbf{C} \subset \mathbf{V} \setminus \{A,B\} $.
\reva{Faithfulness assumption is one of the most common assumptions in causality and fairness literature~\cite{chiappa2019path, xu2019achieving, KusnerLRS2017, jiang2019wasserstein, chiappa2018causal, KilbertusRPHJS2017,ZhangB2018a,ZhangB2018b,khademi2019algorithmic,khademi2019fairness, russell2017worlds,capuchin}, which is crucial to model the input dataset. }

\noindent \textbf{Classifier Training.} A new variable $Y'$ (prediction variable) is generated by learning a predictor over the selected subset of features ($ \mathbf{A} \cup \mathbf{T}$), and this predictor is the \sainyam{Bayes optimal classifier with}  $Pr[Y'|\mathbf{A} \cup \mathbf{T}]$ derived from the observational distribution $P(\mathbf{V})$. \reva{It is equivalent to adding $Y'$ as a new node in the causal graph which is a children of all features that impact the classifier output. } 
We make Assumption~\ref{yprime-assumption} to ensure that one would apply the same \sainyam{Bayes} optimal predictor that has been learnt from observational data to all datasets irrespective of the intervention. \sainyam{This assumption is crucial to decouple fairness of feature selection from the training procedure and to theoretically analyze the quality of bias removal in feature selection.  Training the classifier by performing  feature engineering over the identified features satisfies this assumption.}
\begin{assumption}\label{yprime-assumption}
For evaluating the fairness criterion in Definition \ref{def:causal-fair} using hypothetical interventional distributions, we assume that the mechanism generating $Y'$ is the same as $P[Y'|\mathbf{A} \cup \mathbf{T}]$ where $P(\cdot)$ is the observational distribution.
\end{assumption}

\noindent\textbf{Problem intuition:}
According to the definition of causal fairness, the output distribution of the prediction algorithm should not change when the value of sensitive variables  is changed \sainyam{whenever we intervene on $\mathbf{A}$. According to \texttt{do}-calculus, intervention on ($\mathbf{A}$) is equivalent to removal of its incoming edges and conditioning on $\mathbf{A}$. } If all paths from the sensitive variables  to the learnt target $Y'$ that go through the variables considered by $f$ are blocked \sainyam{after an intervention on the admissible variables}, then the  features considered by $f$ are causally-fair. 
\reva{We first show that the maximal set of features that ensure causal fairness is unique.}
\vspace{-2mm}
\reva{\begin{lemma}
Consider two different set of attributes $\mathbf{X_1}$ and $\mathbf{X_2}$ such that $\mathbf{X_1}\neq \mathbf{X_2}$. If a classifier trained on $\mathbf{X_1}$ and $\mathbf{X_2}$ separately is causally fair, then a classifier trained on $\mathbf{X_1}\cup \mathbf{X_2}$ is also causally fair.
\label{lem:union}
\end{lemma}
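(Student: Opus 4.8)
The plan is to reduce the claim to the \textbf{composition axiom} stated earlier, by first translating ``a classifier trained on $\mathbf{X_1}$ is causally fair'' into a conditional-independence statement that involves only $\mathbf{S}$, $\mathbf{A}$, and the feature set $\mathbf{X_1}$. Recall from the discussion following Definition~\ref{def:causal-fair1} that a classifier $Y'$ is causally fair exactly when $\mathbf{S}$ and $Y'$ are d-separated given $\mathbf{A}$ in the intervened graph $G'$ (incoming edges of $\mathbf{S}$ and $\mathbf{A}$ removed), and that training a classifier on $\mathbf{A}\cup\mathbf{X_1}$ amounts to inserting a childless node $Y'_1$ into the graph whose parents are its input features. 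Throughout I would carry three augmented nodes $Y'_1, Y'_2, Y'_{12}$ with parent sets $\mathbf{A}\cup\mathbf{X_1}$, $\mathbf{A}\cup\mathbf{X_2}$, and $\mathbf{A}\cup\mathbf{X_1}\cup\mathbf{X_2}$ respectively.

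First I would establish the key reduction: \emph{the classifier trained on $\mathbf{A}\cup\mathbf{X_1}$ is causally fair iff $\mathbf{S}\bot \mathbf{X_1}\mid\mathbf{A}$ in $G'$}. The backward direction is the clean one: since $Y'_1$ is a (possibly stochastic) function of $\mathbf{A}\cup\mathbf{X_1}$ with exogenous noise independent of $\mathbf{S}$, the factorization $\Pr(\mathbf{S},\mathbf{X_1}\mid\mathbf{A})=\Pr(\mathbf{S}\mid\mathbf{A})\Pr(\mathbf{X_1}\mid\mathbf{A})$ implied by $\mathbf{S}\bot\mathbf{X_1}\mid\mathbf{A}$ integrates out to $\mathbf{S}\bot Y'_1\mid\mathbf{A}$, i.e.\ $I(Y'_1,\mathbf{S}\mid\mathbf{A})=0$, so Lemma~\ref{lem:cmi} delivers fairness. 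For the forward direction I would argue on paths in the augmented $G'$: any path from $\mathbf{S}$ to $Y'_1$ must enter $Y'_1$ through one of its parents $W$, and since the terminal edge $W\to Y'_1$ makes $W$ a non-collider, conditioning on $\mathbf{A}$ blocks every path with $W\in\mathbf{A}$; hence an active path can survive only if some $X\in\mathbf{X_1}$ is d-connected to $\mathbf{S}$ given $\mathbf{A}$, which is exactly the negation of $\mathbf{S}\bot\mathbf{X_1}\mid\mathbf{A}$.

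With the reduction in hand the argument closes in one line. Fairness of the classifiers trained on $\mathbf{X_1}$ and on $\mathbf{X_2}$ gives $\mathbf{S}\bot\mathbf{X_1}\mid\mathbf{A}$ and $\mathbf{S}\bot\mathbf{X_2}\mid\mathbf{A}$; the composition axiom then yields $\mathbf{S}\bot \mathbf{X_1}\cup\mathbf{X_2}\mid\mathbf{A}$; and applying the reduction backwards to $Y'_{12}$, whose parent set is $\mathbf{A}\cup\mathbf{X_1}\cup\mathbf{X_2}$, produces $\mathbf{S}\bot Y'_{12}\mid\mathbf{A}$, i.e.\ causal fairness of the classifier trained on $\mathbf{X_1}\cup\mathbf{X_2}$.

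I expect the reduction step, and specifically its forward direction, to be the main obstacle, because it is where the Bayes-optimal nature of $Y'$ interacts with the graph structure. The delicate point is that a Bayes-optimal predictor need not literally depend on every input feature, and collider effects could in principle let the combined predictor $Y'_{12}$ exploit a feature that neither $Y'_1$ nor $Y'_2$ used; the \textbf{faithfulness} assumption (Assumption~\ref{faithfulness}) is precisely what forbids such cancellations and equates observed conditional independencies with d-separations, so I would lean on it both to justify modeling $Y'_i$ as a child of its full input set and to make the path-based characterization rigorous. Once that equivalence is secured, the composition axiom does the remaining work.
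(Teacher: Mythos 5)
Your proof follows the same skeleton as the paper's (translate fairness into d-separation statements over features given $\mathbf{A}$ in $G'$, combine via composition, translate back), but its pivotal step --- the claimed equivalence ``the classifier trained on $\mathbf{A}\cup\mathbf{X_1}$ is causally fair iff $\mathbf{S}\bigCI\mathbf{X_1}\mid\mathbf{A}$ in $G'$'' --- is false in the forward direction, and your whole chain relies on that direction (it is what turns the hypothesis of Lemma~\ref{lem:union} into the two independence statements you feed to the composition axiom). Under Definition~\ref{def:causal-fair} the classifier is the \emph{Bayes-optimal} predictor, so it is a child only of the features it actually depends on, not of every feature it is trained on. The paper's own Lemma~\ref{lem:c2} supplies a counterexample to your ``only if'': take $\mathbf{X_1}=\mathbf{C}_2$, e.g.\ $X_2$ in Figure~\ref{fig:diag2}, with $X_2\nbigCI\mathbf{S}\mid\mathbf{A}$ but $X_2\bigCI Y\mid\mathbf{A}$; the Bayes predictor trained on $\mathbf{A}\cup\{X_2\}$ coincides with the one trained on $\mathbf{A}$ alone and is causally fair, yet $\mathbf{S}\bigCI\mathbf{X_1}\mid\mathbf{A}$ fails. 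Note also that the path argument you offer for the forward direction actually proves the backward one: ``an active path from $\mathbf{S}$ to $Y'_1$ forces some d-connected feature'' is the contrapositive of ``features d-separated $\Rightarrow$ $Y'_1$ d-separated.'' To get ``only if'' you would need every feature d-connected to $\mathbf{S}$ to induce an active path into $Y'_1$, which requires that feature to be an actual parent of $Y'_1$ --- precisely what Bayes optimality can deny.

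Your own hedge does not repair this: faithfulness (Assumption~\ref{faithfulness}) equates conditional independence in the data with d-separation in $G$, but it neither forces the Bayes predictor to depend on uninformative inputs nor prevents a feature ignored by $Y'_1$ from entering the parent set of $Y'_{12}$ once conditioning on $\mathbf{X_2}$ opens a collider path (e.g.\ $X\rightarrow C\leftarrow Y$ with $C\in\mathbf{X_2}$ gives $X\bigCI Y\mid\mathbf{A}$ but $X\nbigCI Y\mid\mathbf{A},C$, with no unfaithful cancellation involved). The paper avoids the false equivalence: from $\mathbf{S}\bigCI Y'_1\mid_{G'}\mathbf{A}$ it deduces d-separation only for $\Pa(Y'_1)$, the features the classifier actually uses, does the same for $Y'_2$, and then asserts that $\Pa(Y')\subseteq\mathbf{X_1}\cup\mathbf{X_2}$ is d-separated from $\mathbf{S}$ before concluding via a do-calculus computation over parent configurations; that assertion is where the paper implicitly models each $Y'_i$ as a child of \emph{all} features in $\mathbf{X_i}$ (plus an implicit use of composition). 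If you adopt that modeling convention explicitly, your reduction becomes true and your argument essentially coincides with the paper's; as written, with Bayes optimality and an appeal to faithfulness, the forward step is a genuine gap.
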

\vspace{-5mm}
\begin{proof}
Let $Y_1'$ and $Y_2'$ denote the output variable of the classifier trained on $\mathbf{X_1}$ and $\mathbf{X_2}$. Let $G'$ denote a modified causal graph where incoming edges of $\mathbf{S}$ and $\mathbf{A}$ are removed.
According to the definition of causal fairness, all paths from the sensitive atrributes to $Y_1'$ are blocked in $G'$, i.e. $ \mathbf{S}\bot Y_1' |_{G'} \mathbf{A}$. Since, $Y_1'$ is a child of attributes in $\mathbf{X_1}$, all paths from $\mathbf{S}$ to the parents of $Y_1'$ are blocked, i.e., $\Pa(Y_1')\bot \mathbf{S} |_{G'} \mathbf{A}$. We get the same condition for $\mathbf{X_2}$.
Let $Y' = f(\mathbf{X_1\cup X_2})$. We first simplify the LHS of causal fairness definition as follows.
{\footnotesize
\begin{align*}
   & \Pr(Y'=y | \texttt{do}(\mathbf{S})=\mathbf{s}, \texttt{do}(\mathbf{A})=\mathbf{a}) \\
    &=\sum_{\Pa(Y')=\mathbf{c}} (\Pr(Y'=y | \Pa(Y')=\mathbf{c}, \texttt{do}(\mathbf{S})=\mathbf{s}, \texttt{do}(\mathbf{A})=\mathbf{a})\times  \\&~~~~~~~~~~~~~~ \Pr(\Pa(Y')=\mathbf{c}|\texttt{do}(\mathbf{S})=\mathbf{s}, \texttt{do}(\mathbf{A})=\mathbf{a}))\\
    &=\sum_{\Pa(Y')=\mathbf{c}} \Pr(Y'=y | \Pa(Y')=\mathbf{c}) \Pr(\Pa(Y')=\mathbf{c}|\texttt{do}(\mathbf{S})=\mathbf{s}, \texttt{do}(\mathbf{A})=\mathbf{a})\\
    &=\sum_{\Pa(Y')=\mathbf{c}} \Pr(Y'=y | \Pa(Y')=\mathbf{c}) \texttt{Pr}_{G'}(\Pa(Y')=\mathbf{c}|\mathbf{S}=\mathbf{s}, \mathbf{A}=\mathbf{a})
\end{align*}
}
Since, $Y'$ is trained over $\mathbf{X_1}$ and $\mathbf{X_2}$,  $\Pa(Y') \subseteq \mathbf{X_1}\cup \mathbf{X_2}$. Therefore,  $Pa(Y')\bot_{G'} \mathbf{S} | \mathbf{A}$, implying $\texttt{Pr}_{G'}(\Pa(Y')=\mathbf{c}|\mathbf{S}=\mathbf{s}, \mathbf{A}=\mathbf{a})=\texttt{Pr}_{G'}(\Pa(Y')=\mathbf{c}|\mathbf{A}=\mathbf{a})$. Following the same simplification on RHS of Definition~\ref{def:causal-fair}, we get that $\mathbf{X_1\cup X_2}$ are causally fair.
\end{proof}}

\vspace{-2mm}
\reva{Using Lemma~\ref{lem:union}, we prove that problem~\ref{prob:fair} has a unique solution.
\begin{lemma}\label{lem:maximal}
Problem~\ref{prob:fair} has a unique solution $\mathbf{T}^*$.
\end{lemma}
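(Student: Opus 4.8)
The plan is to promote the pairwise union-closure established in Lemma~\ref{lem:union} into closure under arbitrary (finite) unions, and then observe that the union of \emph{all} causally-fair feature sets is itself causally fair and contains every competitor, which gives a unique maximum with respect to set inclusion.

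First I would fix notation by letting $\mathcal{F} = \{\mathbf{T}\subseteq\mathbf{X} : \mathbf{A}\cup\mathbf{T} \text{ is causally fair}\}$ denote the family of all feasible feature sets for Problem~\ref{prob:fair}. This family is nonempty: taking $\mathbf{T}=\emptyset$ leaves $D'=\mathbf{A}$, whose Bayes-optimal predictor trivially satisfies Definition~\ref{def:causal-fair}, since the sensitive attributes influence the output only through the admissible set $\mathbf{A}$ and $D$ exhibits no fairness violation to begin with.

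Next I would invoke Lemma~\ref{lem:union}. It asserts that if the classifiers trained separately on two distinct sets are causally fair, then so is the classifier trained on their union; combined with the trivial case $\mathbf{X_1}=\mathbf{X_2}$, this shows that $\mathcal{F}$ is closed under pairwise union. Because $\mathbf{X}=\{X_1,\ldots,X_n\}$ is finite, $\mathcal{F}$ is a finite family, so a routine induction on $|\mathcal{F}|$ upgrades pairwise closure to closure under finite unions. I would then define $\mathbf{T}^* = \bigcup_{\mathbf{T}\in\mathcal{F}}\mathbf{T}$, which by finiteness is a finite union of members of $\mathcal{F}$; hence $\mathbf{T}^*\in\mathcal{F}$, i.e.\ $\mathbf{A}\cup\mathbf{T}^*$ is causally fair.

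Finally I would conclude uniqueness. By construction $\mathbf{T}\subseteq\mathbf{T}^*$ for every feasible $\mathbf{T}\in\mathcal{F}$, so $\mathbf{T}^*$ is the unique maximum of $\mathcal{F}$ under inclusion; in particular it has maximum cardinality, and any other maximum-cardinality feasible $\mathbf{T}'$ satisfies $\mathbf{T}'\subseteq\mathbf{T}^*$ with $|\mathbf{T}'|=|\mathbf{T}^*|$, forcing $\mathbf{T}'=\mathbf{T}^*$. The main point to handle carefully is the passage from the pairwise statement of Lemma~\ref{lem:union} to arbitrary unions together with the explicit appeal to finiteness of $\mathbf{X}$, and the reading of ``largest'' as ``maximal under inclusion''; once these are pinned down, union-closure does all the work and the result is immediate.
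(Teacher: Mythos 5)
Your proposal is correct, and it rests on exactly the same engine as the paper's proof --- Lemma~\ref{lem:union} --- but it packages the argument differently. The paper argues by contradiction: it supposes two distinct maximal feasible sets $\mathbf{T}_1 \neq \mathbf{T}_2$, applies Lemma~\ref{lem:union} to conclude $\mathbf{T}_1 \cup \mathbf{T}_2$ is also causally fair, and notes that this union is strictly larger than both, contradicting their maximality; uniqueness follows in two lines. You instead build the solution constructively: you form the family $\mathcal{F}$ of all feasible sets, upgrade the pairwise closure of Lemma~\ref{lem:union} to closure under finite unions by induction (the induction is really on the number of sets being unioned rather than on $|\mathcal{F}|$, but this is cosmetic), and exhibit $\mathbf{T}^* = \bigcup_{\mathbf{T}\in\mathcal{F}}\mathbf{T}$ as a member of $\mathcal{F}$ dominating every competitor. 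Your route is longer but buys a strictly stronger conclusion: $\mathbf{T}^*$ is not merely the unique largest set but a \emph{maximum} under inclusion, i.e.\ every causally fair subset of $\mathbf{X}$ is contained in it --- a structural fact the paper's contradiction argument never makes explicit, and one that is arguably what makes greedy/testing-based algorithms like \texttt{SeqSel} sensible. You are also more careful on two small points the paper glosses over: feasibility of $\mathbf{T}=\emptyset$ (so the problem is well-posed) and the degenerate case $\mathbf{X_1}=\mathbf{X_2}$ excluded by the hypothesis of Lemma~\ref{lem:union}.
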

\vspace{-5mm}
\begin{proof}
Suppose, Problem~\ref{prob:fair} does not have a unique solution. Let $\mathbf{T}_1$ and $\mathbf{T}_2$ be two different maximal sets of features that ensure causal fairness. Using Lemma~\ref{lem:union}, $\mathbf{T}_1\cup \mathbf{T}_2$ also ensure causal fairness. Since $\mathbf{T}_1\neq \mathbf{T}_2$, $ |\mathbf{T}_1\cup \mathbf{T}_2| > |\mathbf{T}_1|, |\mathbf{T}_2|$. This is a contradiction, as $\mathbf{T}_1$ and $\mathbf{T}_2$ are maximal sets. Therefore, the assumption that Problem~\ref{prob:fair} does not have a unique solution is wrong.
\end{proof}}
\vspace{-3mm}
\section{Solution Approach\label{sec:solution}}
\revb{In this section, we first present key properties using an example and generalize them to discuss our algorithm, \texttt{SeqSel}. Section~\ref{sec:theory} analyzes the different steps of Algorithm~\ref{alg:fairjoin} to guarantee causal fairness of identified features and Theorem~\ref{thm:main} presents a close-form expression to identify maximal set of causally-fair features.}
\subsection{Algorithm}
One na{\"\i}ve solution to ensure fairness is to consider only the admissible variables $\mathbf{A}$ for prediction and not add any other feature to the dataset $D$. This would satisfy the fairness condition but achieve poor prediction performance as there may be a variable $X\in \mathbf{X}$ that is highly correlated with the target variable $Y$. Another extreme solution is to consider all the variables of $\mathbf{X}$ for prediction. This approach would yield high predictive performance but can have arbitrarily poor fairness. 
We propose \texttt{SeqSel} (Algorithm~\ref{alg:fairjoin}) which considers the collection of variables $\mathbf{A}$, $\mathbf{S}$ and $\mathbf{X}$ to identify the largest subset of $\mathbf{X}$ which when considered along with $\mathbf{A}$ ensure causal fairness of the learnt variable $Y'$. \sainyam{\texttt{SeqSel} algorithm performs CI tests over the observed data  without explicit knowledge of the underlying causal graph. We use causal graphs only to illustrate the intuition behind the different components of our algorithm.}

 \sainyam{Figure~\ref{fig:fairjoin} presents different   example causal graphs, to understand the solution approach and identify CI tests that can be performed without inferring the complete causal graph.  These graphs contain sensitive variables $\mathbf{S}$, admissible variables $\mathbf{A}$, target variable $Y$ along with other subsidiary variables $X_i$'s.}

\begin{enumerate}
\item \sainyam{In all three figures, variables like $X_1$ have unblocked paths from $\mathbf{S}$ to  $X_1$ but all these paths are blocked by the admissible set. Therefore, these variables do not capture any new information about the protected variables. \reva{In general, such variables can be identified by checking if $X_1$ is conditionally independent of $\mathbf{S}$ given $\mathbf{A}$, i.e. $(X_1\bigCI \mathbf{S} | \mathbf{A})$.} }
\item \sainyam{Variables like $X_3$ in Figure~\ref{fig:diag2} are independent of the sensitive attributes and  can be identified easily by performing CI test between variable $X$ and $\mathbf{S}$.}

\item \sainyam{Variable like $X_3$ in Figure~\ref{fig:diag3} is not independent of $S_1$ but is independent of $S_1$ given $A_2$. $X_3$ ensures causal fairness and can be identified by testing $X_3\bigCI \mathbf{S} | A_2$.  } 

\item \sainyam{$X_2$ in Figure~\ref{fig:diag2} and \ref{fig:diag3} is not independent of $S_1$ even with an intervention on $\mathbf{A}$ and captures sensitive information. However, $X_2$ is independent of $Y$ given $\mathbf{A}$.} 
\end{enumerate}
 {\small
\begin{minipage}[t]{0.47\textwidth}
\vspace{-5mm}
\begin{algorithm}[H]
   \caption{\texttt{SeqSel} }
   \label{alg:fairjoin}
\begin{algorithmic}[1]
   \State {\bfseries Input:} Variables $\mathbf{A}, \mathbf{S}, \mathbf{X}, Y$
   \State $\mathbf{C}_1\leftarrow \phi$
   \For{$X\in \mathbf{X}$}\algorithmiccomment{First phase}
   \If{$\exists A\subseteq \mathbf{A}\text{ such that}(X\bigCI\mathbf{S} | {A}) $} \algorithmiccomment{CI test condition}
   \State $\mathbf{C}_1\leftarrow \mathbf{C}_1 \cup \{X\}$
   \EndIf
   \EndFor
    \State $\mathbf{C}_2\leftarrow \phi$\algorithmiccomment{Second phase}
   \State $\mathbf{X}\leftarrow \mathbf{X}\setminus \mathbf{C}_1 $
   \For{$X\in \mathbf{X}$}
   \If{$(X\bigCI Y | \mathbf{A}\cup \mathbf{C}_1)$}
   \State $\mathbf{C}_2\leftarrow \mathbf{C}_2 \cup \{X\}$
   \EndIf
   \EndFor\\
   \Return  $\mathbf{C}_1\cup \mathbf{C}_2$
   \end{algorithmic}
\end{algorithm}
\end{minipage}
}
 
\reva{The different types of variables considered in points 1-3 above do not capture any sensitive information after intervening on $\mathbf{A}$ or any subset of $\mathbf{A}$.} We denote these variables by $\mathbf{C}_1$, identified by testing CI of $X$ with $\mathbf{S}$ given any subset of $\mathbf{A}$. Therefore, all paths from $\mathbf{S}\rightarrow X \rightarrow Y$ are blocked for all these variables. The variables that capture sensitive information but are independent of $Y$ given all the selected features $\mathbf{C}_1\cup \mathbf{A}$ also do not impact the bayes-optimal classifier. This shows that all the variables discussed above ensure causal fairness. Any variable that is not independent of $\mathbf{S}$ and $Y$ even after intervening on $\mathbf{A}$ is biased and is not safe to be added. $X_2$ in Figure~\ref{fig:diag1} is one such example. 
\reva{Consider a variation in Figure~\ref{fig:diag2} by adding an edge $X_3\rightarrow X_1$. Even then $X_1$ is a valid feature to ensure causal fairness. However, $X_1\nbigCI S_1 | A_1$ and therefore, the above mentioend CI conditions do not capture such variables. Specifically, if a variable $X$ has a blocked path from $S$ which forms a collider at the admissible attribute $A$, then above mentioned CI tests do not capture $X$ in the set of fair features. We discuss this condition more formally in Theorem~\ref{thm:main}.  }
\begin{remark}
In Figure~\ref{fig:diag1}, $X_2\nrightarrow X_1$ because there does not exist any path from $\mathbf{S}$ to $X_1$ which is unblocked given $\mathbf{A}$.
\end{remark}

\begin{remark}
If $\mathbf{C}_2$ is conditionally independent of $Y$ given $\mathbf{A},\ \mathbf{C}_1$, it may not contribute towards the predictive power of the Bayes optimal classifier trained on these variables. However, for most practical purposes the classifier trained can leverage $\mathbf{C}_2$ for better prediction.
\end{remark}
 Algorithm~\ref{alg:fairjoin} captures these intuitions to perform CI tests in two phases. The first phase (lines 3-5) identifies all variables that do not get affected by sensitive attributes, in the presence of admissible attributes $\mathbf{A}$ or any subset of $\mathbf{A}$. All these variables do not capture any extra information about sensitive attributes and are safe to be added to the dataset $D$. The rest of the variables, $\mathbf{X}\setminus \mathbf{C}_1$,  capture information about sensitive attributes which can worsen fairness of the dataset.  The second phase (lines 6-10) identifies the subset such that the target variable is not affected by their sensitive information in the presence of admissible attributes. We call this algorithm \texttt{SeqSel} as it sequentially performs CI tests to select features.

\vspace{-2mm}
\subsection{Theoretical Analysis\label{sec:theory}}
In this section, we show that the variables identified by \texttt{SeqSel} ensure causal fairness. 
We consider the original causal graph $G$ along with a new variable $Y'$ that refers to the prediction variable trained using the variables $\mathbf{A}$ along with the variables returned by Algorithm~\ref{alg:fairjoin}.  We first show that the variables $\mathbf{C}_1$ and $\mathbf{C}_2$ identified by Algorithm~\ref{alg:fairjoin} maintain causal fairness.  For this analysis, we assume that the target variable $Y$ does not have a child. 

\begin{lemma}
Consider a dataset $D$ with admissible variables $\mathbf{A}$ and sensitive $\mathbf{S}$ and  a collection of variables $\mathbf{C}_1$. If $\exists A\subseteq \mathbf{A}$ such that $(\mathbf{C}_1\bigCI \mathbf{S}|{A})$ then $\mathbf{A}\cup \mathbf{C}_1$ is causally fair.\label{lem:c1}
\end{lemma}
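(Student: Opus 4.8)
The plan is to reduce causal fairness of a classifier trained on $\mathbf{A}\cup\mathbf{C}_1$ to a single d-separation statement in the intervened graph $G'$ (the graph obtained by deleting all incoming edges of $\mathbf{S}$ and $\mathbf{A}$), reusing the probabilistic simplification already carried out in the proof of Lemma~\ref{lem:union}. Since $Y'$ is trained on $\mathbf{A}\cup\mathbf{C}_1$ we have $\Pa(Y')\subseteq\mathbf{A}\cup\mathbf{C}_1$, and conditioning on $\mathbf{A}$ fixes the admissible coordinates of $\Pa(Y')$; hence by the Decomposition axiom it suffices to establish the stronger statement $\mathbf{C}_1\bigCI\mathbf{S}|_{G'}\mathbf{A}$. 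Once this holds, $\texttt{Pr}_{G'}(\Pa(Y')=\mathbf{c}|\mathbf{S}=\mathbf{s},\mathbf{A}=\mathbf{a})$ no longer depends on $\mathbf{s}$, so the two sides of Definition~\ref{def:causal-fair1} coincide and $\mathbf{A}\cup\mathbf{C}_1$ is causally fair.

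First I would apply the faithfulness assumption (Assumption~\ref{faithfulness}) to translate the hypothesis $\mathbf{C}_1\bigCI\mathbf{S}|A$ for some $A\subseteq\mathbf{A}$ into the purely graphical statement that every path between $\mathbf{C}_1$ and $\mathbf{S}$ is blocked by $A$ in the original graph $G$. The heart of the argument is then a transfer step: d-separation of $\mathbf{C}_1$ and $\mathbf{S}$ given the possibly small set $A$ in $G$ implies d-separation given the full set $\mathbf{A}$ in $G'$. The decisive structural fact is that deleting all incoming edges of $\mathbf{A}$ makes every node of $\mathbf{A}$ a source in $G'$, i.e.\ a node with no parents and hence no ancestors there.

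I would prove the transfer by taking an arbitrary path $p$ between $\mathbf{C}_1$ and $\mathbf{S}$ in $G'$ and splitting into two cases. If $p$ passes through some $A_i\in\mathbf{A}$, then $A_i$ cannot be a collider on $p$ because it has no incoming edges in $G'$; it is therefore a conditioned non-collider and blocks $p$. If instead $p$ avoids $\mathbf{A}$ entirely, then, since $G'$ is a subgraph of $G$ with identical edge orientations, $p$ is also a path in $G$; as $p$ avoids $A$, the blocking guaranteed by the $G$-side d-separation cannot occur at a non-collider, so it occurs at a collider $v$ none of whose $G$-descendants lies in $A$. This same $v$ blocks $p$ in $G'$ given $\mathbf{A}$: $v$ has two incoming edges on $p$ so $v\notin\mathbf{A}$, and since every node of $\mathbf{A}$ is a source in $G'$ it has no ancestors there and thus cannot be a descendant of $v$, so no $G'$-descendant of $v$ lies in $\mathbf{A}$. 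Hence $p$ is blocked in both cases and $\mathbf{C}_1\bigCI\mathbf{S}|_{G'}\mathbf{A}$.

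Finally I would assemble the pieces: the Decomposition axiom yields $\Pa(Y')\bigCI\mathbf{S}|_{G'}\mathbf{A}$, substituting this into the simplified form of $\Pr(Y'=y|\texttt{do}(\mathbf{S})=\mathbf{s},\texttt{do}(\mathbf{A})=\mathbf{a})$ borrowed from Lemma~\ref{lem:union} eliminates all dependence on $\mathbf{s}$, and Definition~\ref{def:causal-fair1} follows for $\mathbf{A}\cup\mathbf{C}_1$. I expect the case-2 step of the transfer to be the main obstacle: the conditioning set grows from $A$ to $\mathbf{A}$, and enlarging a conditioning set can in general \emph{activate} a collider and thereby unblock a path. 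The resolution, and the reason the lemma is true, is precisely that $G'$ contains no edges into $\mathbf{A}$, so the extra conditioning variables in $\mathbf{A}\setminus A$ are roots and can never be descendants of any collider on $p$.
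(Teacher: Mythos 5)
Your proof is correct, but it takes a genuinely different route from the paper's. The paper argues purely algebraically with do-calculus: it expands $Pr[Y'|\texttt{do}(\mathbf{S}),\texttt{do}(\mathbf{A})]$ by summing over $\mathbf{C}_1$, drops $\texttt{do}(\mathbf{S})$ from $Pr[\mathbf{C}_1|\texttt{do}(\mathbf{S}),\texttt{do}(\mathbf{A})]$ via an appendix lemma (Lemma~\ref{lem:thirdrule}) that instantiates the third rule of do-calculus with the hypothesis $(\mathbf{C}_1\bigCI\mathbf{S}|A)$, drops $\texttt{do}(\mathbf{S})$ from $Pr[Y'|\mathbf{C}_1,\texttt{do}(\mathbf{S}),\texttt{do}(\mathbf{A})]$ via Lemma~\ref{lem:yprime} (which rests on Assumption~\ref{yprime-assumption}), and resums to obtain $Pr[Y'|\texttt{do}(\mathbf{A})]$. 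You instead reduce the claim to the single d-separation statement $\mathbf{C}_1\bigCI\mathbf{S}|_{G'}\mathbf{A}$ in the mutilated graph, by recycling the truncated-factorization expansion over $\Pa(Y')$ from the proof of Lemma~\ref{lem:union}, and you establish that statement with an explicit path/collider argument. The mathematical crux is the same in both proofs --- passing from independence given some $A\subseteq\mathbf{A}$ to independence given all of $\mathbf{A}$ after the intervention cannot activate a collider, because every node of $\mathbf{A}$ is a source of $G'$ --- but the paper disposes of this in a single sentence inside Lemma~\ref{lem:thirdrule}, whereas you prove it carefully by splitting on whether the path meets $\mathbf{A}$ (if it does, it is blocked at a conditioned non-collider; if it does not, the collider that blocks it in $G$ given $A$ still blocks it in $G'$ given $\mathbf{A}$, since nodes of $\mathbf{A}$ cannot be $G'$-descendants of anything). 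Your route buys self-containedness (no appeal to do-calculus rules beyond the mutilated-graph interpretation of interventions) and makes explicit the one step where the argument could plausibly fail; the paper's route buys brevity and modularity, since Lemmas~\ref{lem:thirdrule} and~\ref{lem:yprime} are reused in the proof of Lemma~\ref{lem:c2}, where your parent-set reduction would not carry over directly because $\mathbf{C}_2$ is not d-separated from $\mathbf{S}$ given $\mathbf{A}$ and only the Bayes-optimality of $Y'$ (via $\mathbf{C}_2\bigCI Y|\mathbf{A},\mathbf{C}_1$) rescues fairness there.
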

\begin{proof}
Given $(\mathbf{C}_1\bigCI \mathbf{S}|{A})$ for some $A\subseteq$, the variable $X$ does not capture any information about the sensitive variables. Hence all paths from $\mathbf{S}$ to the target $Y$ that pass through $X$ are blocked. Mathematically, we consider a causal graph along with $Y'$ and evaluate the distribution under the intervention of  $\mathbf{A}$ and $\mathbf{S}$ as follows.
{\footnotesize
\begin{eqnarray*}
&&Pr[Y'|do(\mathbf{S}),do(\mathbf{A})]= \sum_{\mathbf{C}_1}Pr[Y'|\mathbf{C}_1,do(\mathbf{S}),do(\mathbf{A})] Pr[\mathbf{C}_1|do(\mathbf{S}),do(\mathbf{A})]\\
&&\text{Using Lemma~9 from the full version~\cite{fullversion}}\\
&=& \sum_{\mathbf{C}_1}Pr[Y'|\mathbf{C}_1,do(\mathbf{S}),do(\mathbf{A})] Pr[\mathbf{C}_1|do(\mathbf{A})]\\
&&\text{Using Lemma~10 from the full version}\\
&=& \sum_{\mathbf{C}_1}Pr[Y'|\mathbf{C}_1,do(\mathbf{A})] Pr[\mathbf{C}_1|do(\mathbf{A})]= Pr[Y'|do(A)]
\end{eqnarray*}
}
This shows that any intervention on $\mathbf{S}$ does not affect the variable $Y'$, thereby ensuring causal fairness of the considered features.
\end{proof}
The following lemma justifies the addition of $\mathbf{C}_2$ to the dataset $D$ without affecting its causal fairness.
\begin{lemma}
Consider a dataset $D$ with admissible variables $\mathbf{A}$ and sensitive $\mathbf{S}$, a set of variables $\mathbf{C}_1$ satisfying  $(\mathbf{C}_1\bigCI \mathbf{S}|\mathbf{A})$ and  a collection of variables $\mathbf{C}_2$ with $(\mathbf{C}_2\nbigCI \mathbf{S}|\mathbf{A})$, if $(\mathbf{C}_2\bigCI Y|\mathbf{A},\mathbf{C}_1)$ then $\mathbf{A}\cup \mathbf{C}_2\cup \mathbf{C}_1$ is causally fair.\label{lem:c2}
\end{lemma}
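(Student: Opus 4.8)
The plan is to reduce this statement to Lemma~\ref{lem:c1}, whose hypothesis $(\mathbf{C}_1\bigCI\mathbf{S}\mid\mathbf{A})$ is exactly what we are given here (it is the case $A=\mathbf{A}$ of that lemma). The only genuinely new ingredient is the block $\mathbf{C}_2$, which by assumption \emph{does} carry sensitive information ($\mathbf{C}_2\nbigCI\mathbf{S}\mid\mathbf{A}$); so I must argue that including $\mathbf{C}_2$ among the parents of $Y'$ nonetheless leaves the interventional distribution of $Y'$ unchanged, after which the fairness computation of Lemma~\ref{lem:c1} applies verbatim.

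First I would use the training setup (Assumption~\ref{yprime-assumption}): $Y'$ is the Bayes-optimal predictor whose conditional law $\Pr[Y'\mid\mathbf{A},\mathbf{C}_1,\mathbf{C}_2]$ equals the observational conditional of the target $Y$. The hypothesis $(\mathbf{C}_2\bigCI Y\mid\mathbf{A},\mathbf{C}_1)$ gives $\Pr[Y\mid\mathbf{A},\mathbf{C}_1,\mathbf{C}_2]=\Pr[Y\mid\mathbf{A},\mathbf{C}_1]$, and therefore $\Pr[Y'\mid\mathbf{A},\mathbf{C}_1,\mathbf{C}_2]=\Pr[Y'\mid\mathbf{A},\mathbf{C}_1]$. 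In other words, although $\mathbf{C}_2\subseteq\Pa(Y')$ syntactically, $\mathbf{C}_2$ does not alter the conditional law of $Y'$ once $\mathbf{A}$ and $\mathbf{C}_1$ are fixed, so the \emph{effective} parent set of $Y'$ is contained in $\mathbf{A}\cup\mathbf{C}_1$.

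Next I would expand the interventional distribution by conditioning on $\Pa(Y')=\mathbf{A}\cup\mathbf{C}_1\cup\mathbf{C}_2$, exactly as in the proofs of Lemma~\ref{lem:union} and Lemma~\ref{lem:c1}, replacing $\Pr[Y'\mid\Pa(Y')]$ by $\Pr[Y'\mid\mathbf{A},\mathbf{C}_1]$ via the previous step and fixing $\mathbf{A}=\mathbf{a}$ through the intervention:
{\footnotesize
\begin{align*}
\Pr[Y'=y \mid \texttt{do}(\mathbf{S}), \texttt{do}(\mathbf{A})] &= \sum_{\mathbf{c}_1,\mathbf{c}_2} \Pr[Y'=y \mid \mathbf{A}=\mathbf{a}, \mathbf{C}_1=\mathbf{c}_1] \\ &\quad \times \Pr[\mathbf{C}_1=\mathbf{c}_1, \mathbf{C}_2=\mathbf{c}_2 \mid \texttt{do}(\mathbf{S}), \texttt{do}(\mathbf{A})].
\end{align*}
}
Summing out $\mathbf{c}_2$ collapses the second factor to $\Pr[\mathbf{C}_1=\mathbf{c}_1\mid\texttt{do}(\mathbf{S}),\texttt{do}(\mathbf{A})]$, so $\mathbf{C}_2$ drops out entirely. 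Finally, since $(\mathbf{C}_1\bigCI\mathbf{S}\mid\mathbf{A})$, the manipulation of Lemma~\ref{lem:c1} (Lemmas~9 and~10 of the full version) gives $\Pr[\mathbf{C}_1\mid\texttt{do}(\mathbf{S}),\texttt{do}(\mathbf{A})]=\Pr[\mathbf{C}_1\mid\texttt{do}(\mathbf{A})]$, which is free of $\mathbf{S}$; hence $\Pr[Y'=y\mid\texttt{do}(\mathbf{S}),\texttt{do}(\mathbf{A})]$ is independent of the value of $\mathbf{S}$, establishing causal fairness of $\mathbf{A}\cup\mathbf{C}_1\cup\mathbf{C}_2$.

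I expect the main obstacle to be the first reduction: rigorously transferring the conditional independence of the \emph{training target} $Y$ to the \emph{trained output} $Y'$, i.e. justifying that the Bayes-optimal $Y'$ genuinely ignores $\mathbf{C}_2$ even though $\mathbf{C}_2$ is statistically entangled with $\mathbf{S}$. This is precisely where Assumption~\ref{yprime-assumption} is indispensable: without pinning the generating mechanism of $Y'$ to the observational conditional $\Pr[Y'\mid\mathbf{A}\cup\mathbf{T}]$, the independence $(\mathbf{C}_2\bigCI Y\mid\mathbf{A},\mathbf{C}_1)$ would not carry over to $Y'$, and a classifier could in principle exploit $\mathbf{C}_2$ and reintroduce bias.
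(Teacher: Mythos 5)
Your proposal is correct and follows essentially the same route as the paper's own proof: expand $\Pr[Y'\mid\texttt{do}(\mathbf{S}),\texttt{do}(\mathbf{A})]$ over $(\mathbf{C}_1,\mathbf{C}_2)$, use $(\mathbf{C}_2\bigCI Y\mid\mathbf{A},\mathbf{C}_1)$ together with Assumption~\ref{yprime-assumption} to drop $\mathbf{C}_2$ from the $Y'$ factor, marginalize $\mathbf{C}_2$ out, and then eliminate $\texttt{do}(\mathbf{S})$ from $\Pr[\mathbf{C}_1\mid\texttt{do}(\mathbf{S}),\texttt{do}(\mathbf{A})]$ via the do-calculus lemmas exactly as in Lemma~\ref{lem:c1}. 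The only cosmetic difference is the order of operations (you remove $\mathbf{C}_2$ from the $Y'$ factor before stripping the interventions, whereas the paper applies its Lemma~10 first), and your closing discussion of why Bayes optimality is indispensable matches the paper's remark that the effectiveness of $\mathbf{C}_2$ relies crucially on the optimality of the trained classifier.
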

\begin{proof}
We simplify the causal fairness condition as follows:
{\footnotesize
\begin{eqnarray*}
&&Pr[Y'|do(\mathbf{S}),do(\mathbf{A})]\\
&&= \sum_{\mathbf{C}_1,\mathbf{C}_2}\Big(Pr[Y'|\mathbf{C}_1,\mathbf{C}_2,do(\mathbf{S}),do(\mathbf{A})]\times Pr[\mathbf{C}_1,\mathbf{C}_2|do(\mathbf{S}),do(A)]\Big)\\
&&\text{Using Lemma~10 from the full version~\cite{fullversion}}\\
&&= \sum_{\mathbf{C}_1,\mathbf{C}_2}\Big(Pr[Y'|\mathbf{C}_1,\mathbf{C}_2,do(A)]\times Pr[\mathbf{C}_2|\mathbf{C}_1,do(\mathbf{S}),do(A)] Pr[\mathbf{C}_1|do(\mathbf{S}),do(A)]\Big)\\
&&\text{Since $Y'$is independent of $\mathbf{C}_2$ given $\mathbf{A}$ and $\mathbf{C}_1$}\\
&&= \sum_{\mathbf{C}_1,\mathbf{C}_2}\Big(Pr[Y'|\mathbf{C}_1,do(A)]Pr[\mathbf{C}_2|\mathbf{C}_1,do(\mathbf{S}),do(A)]\times  Pr[\mathbf{C}_1|do(\mathbf{S}),do(A)]\Big)
\end{eqnarray*}
\begin{eqnarray*}
&&\text{Summing $Pr[\mathbf{C}_2|\mathbf{C}_1,do(\mathbf{S}),do(A)]$ over $\mathbf{C}_2$}\\
&&= \sum_{\mathbf{C}_1}Pr[Y'|\mathbf{C}_1,do(A)]Pr[\mathbf{C}_1|do(\mathbf{S}),do(A)]=Pr[Y'|do(A)]\\
\end{eqnarray*}
}
This condition shows that  $\mathbf{A}\cup\mathbf{C}_1\cup \mathbf{C}_2$ ensure causal-fairness.
\end{proof}
This shows that the features $\mathbf{C}_1$ and $\mathbf{C}_2$ ensure causal fairness of the dataset. Using these results, we identify a closed form expression to identify all variables that ensure causal fairness. Note that whenever the trained classifier is not bayes optimal, $\mathbf{C}_1$ still ensure causal fairness but the effectiveness of $\mathbf{C}_2$ crucially relies on the optimality of the trained classifier.

\begin{theorem}
Consider a dataset $D$ with admissible variables $\mathbf{A}$, sensitive $\mathbf{S}$, a set of variables $\mathbf{X}$ with a target $Y$. A variable $X\in  \mathbf{X}$ is safe to be added along with $\mathbf{T}\cup \mathbf{A}$, where $\mathbf{T}\subseteq \mathbf{C}_1\cup\mathbf{C}_2\cup \mathbf{A}$  without violating causal fairness iff  (i) $(X\bigCI\mathbf{S}|{A})$ for some $A\subseteq \mathbf{A}$ or (ii) $(X\bigCI Y|\mathbf{C}',\mathbf{A})$, where $(\mathbf{C}'\bigCI \mathbf{S}|A)$ or (iii) $X$ is  not a descendant of $\mathbf{S}$ in $G_{\bar{A}}$, where $G_{\bar{\mathbf{A}}}$ is same as $G$ with  incoming edges of $\mathbf{A}$ removed.\label{thm:main}
\end{theorem}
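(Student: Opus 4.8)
The plan is to prove both directions of the equivalence after first translating causal fairness into a single d-separation statement. By Definition~\ref{def:causal-fair} and the parent-marginalization computation already carried out in the proof of Lemma~\ref{lem:union}, a feature set $\mathbf{A}\cup\mathbf{T}\cup\{X\}$ is causally fair exactly when $\mathbf{S}\bigCI Y'\mid\mathbf{A}$ in the intervened graph $G'$ obtained from $G$ by deleting all incoming edges of $\mathbf{S}$ and $\mathbf{A}$; and since $Y'$ is a sink whose only neighbours are its parents, this is equivalent to $\Pa(Y')\bigCI\mathbf{S}\mid\mathbf{A}$ in $G'$. I would take this reformulation, together with the faithfulness Assumption~\ref{faithfulness} (which lets me pass freely between the CI statements in (i)--(ii) and d-separation in the graph), as the common starting point for both directions.

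For the \emph{if} direction I would dispatch the three cases by reusing the machinery already built. If (i) holds, Lemma~\ref{lem:c1} directly gives that $\mathbf{A}\cup\{X\}$ is causally fair; if (ii) holds, a Lemma~\ref{lem:c2}-style marginalization with $\mathbf{C}_1=\mathbf{C}'$ and $\mathbf{C}_2=\{X\}$ gives that $\mathbf{A}\cup\mathbf{C}'\cup\{X\}$ is causally fair; in either case Lemma~\ref{lem:union} then lets me take the union with the already-fair set $\mathbf{T}$, so $\mathbf{A}\cup\mathbf{T}\cup\{X\}$ stays fair. The genuinely new work is case (iii), which is structural rather than a CI statement. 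Here I would prove the key sub-claim that \emph{if $X$ is not a descendant of $\mathbf{S}$ in $G_{\bar{\mathbf{A}}}$ then $\mathbf{S}\bigCI X\mid\mathbf{A}$ in $G'$.} The engine of the argument is that in $G'$ every node of $\mathbf{A}$ and of $\mathbf{S}$ is a root. Consequently any node of $\mathbf{A}$ appearing in the interior of a path is a fork lying in the conditioning set and therefore blocks the path, so no active path can meet $\mathbf{A}$; and since the only ancestors of $\mathbf{A}$ in $G'$ are the $\mathbf{A}$-nodes themselves, no collider can be activated, which forces every active $\mathbf{S}$--$X$ path to be directed. A directed path from $\mathbf{S}$ to $X$ in $G'\subseteq G_{\bar{\mathbf{A}}}$ would make $X$ a descendant of $\mathbf{S}$ in $G_{\bar{\mathbf{A}}}$, contradicting (iii). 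Hence no active path exists, $X$ behaves like a $\mathbf{C}_1$-variable in $G'$, and $\Pa(Y')\bigCI\mathbf{S}\mid\mathbf{A}$ is preserved.

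For the \emph{only if} direction I would argue the contrapositive: assume $X$ satisfies neither (i) nor (iii), and show fairness forces (ii). Negating (iii) means $X$ \emph{is} a descendant of $\mathbf{S}$ in $G_{\bar{\mathbf{A}}}$; taking the sub-path from the last sensitive node on a witnessing directed path yields a directed $\mathbf{S}$-to-$X$ path that avoids $\mathbf{A}$ and survives in $G'$, so $\mathbf{S}\nbigCI X\mid\mathbf{A}$ in $G'$ through an active directed path. Now if the Bayes-optimal $Y'$ depended on $X$, i.e.\ $X\in\Pa(Y')$, then prepending this path to the edge $X\to Y'$ would give an active $\mathbf{S}$-to-$Y'$ path in $G'$, so $\mathbf{S}\nbigCI Y'\mid\mathbf{A}$ and fairness would fail. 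Hence fairness forces $X\notin\Pa(Y')$. By Assumption~\ref{yprime-assumption} the mechanism generating $Y'$ is $P[Y'\mid\mathbf{A}\cup\mathbf{T}\cup\{X\}]$, so $X\notin\Pa(Y')$ means $P[Y'\mid\mathbf{A}\cup\mathbf{T}\cup\{X\}]=P[Y'\mid\mathbf{A}\cup\mathbf{T}]$, i.e.\ $X\bigCI Y\mid\mathbf{A},\mathbf{T}$, which I then convert into condition (ii) by exhibiting a fair conditioning set $\mathbf{C}'\subseteq\mathbf{T}$.

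The step I expect to be the main obstacle is twofold. First, the structural sub-claim behind case (iii): turning the ``collider-at-an-admissible-attribute'' intuition into the clean statement that rooting $\mathbf{A}$ and $\mathbf{S}$ in $G'$ annihilates every non-directed active path requires care about ancestors of the conditioning set and about endpoints lying in $\mathbf{S}$. Second, in the necessity direction the delicate point is the bookkeeping that presents the conditioning set in (ii) as a genuinely fair set $\mathbf{C}'$ with $\mathbf{C}'\bigCI\mathbf{S}\mid A$: the set $\mathbf{T}$ may contain $\mathbf{C}_2$-type features that are not independent of $\mathbf{S}$, and passing from $X\bigCI Y\mid\mathbf{A},\mathbf{T}$ to $X\bigCI Y\mid\mathbf{A},\mathbf{C}'$ (as well as reconciling ``for some $A\subseteq\mathbf{A}$'' with the full $\mathbf{A}$ used in Lemma~\ref{lem:c2}) is not automatic, since dropping or adding conditioning variables can open colliders. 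This is exactly where faithfulness and the decomposition/composition axioms do the real work.
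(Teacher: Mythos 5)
Your proposal is correct and follows essentially the same route as the paper's own proof: sufficiency of (i) and (ii) via Lemmas~\ref{lem:c1} and~\ref{lem:c2} (glued together with Lemma~\ref{lem:union}), sufficiency of (iii) by arguing that every $\mathbf{S}$--$X$ path is blocked once the incoming edges of $\mathbf{A}$ (and $\mathbf{S}$) are removed, and necessity by composing an active $\mathbf{S}$-to-$X$ path with the edge $X \rightarrow Y'$ to contradict $\mathbf{S}\bigCI Y' \mid \mathbf{A}$ in the intervened graph. Your write-up is in fact more explicit than the paper's in the two places it is terse --- the root/fork/collider argument showing that non-descendants of $\mathbf{S}$ in $G_{\bar{\mathbf{A}}}$ are d-separated from $\mathbf{S}$ given $\mathbf{A}$, and the (genuinely nontrivial) step of converting $X\bigCI Y\mid\mathbf{A},\mathbf{T}$ into condition (ii) with a conditioning set $\mathbf{C}'$ satisfying $\mathbf{C}'\bigCI\mathbf{S}\mid A$, which the paper passes over silently.
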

\begin{proof}
\revb{Using Lemma~\ref{lem:c1} and \ref{lem:c2}, we can observe that all the variables $\mathbf{C}_1\cup \mathbf{C}_2$ such that $(\mathbf{C}_1\bigCI \mathbf{S}|{A})$, where $A\subseteq \mathbf{A}$ and $(\mathbf{C}_2\bigCI Y|\mathbf{C}_1,\mathbf{A})$ are safe to be added without worsening the fairness of the dataset. Now consider a variable $X$, which is not a descendant of $\mathbf{S}$ in $G_{\bar{\mathbf{A}}}$. All paths from $\mathbf{S}$ to $X$ are blocked when we intervene on $\mathbf{A}$ as all incoming edges of $\mathbf{A}$ are removed. Therefore it is safe to add $X$ without affecting causal fairness of the dataset.

To show the converse, when $X\nbigCI\mathbf{S}|A$, $\forall A\subseteq \mathbf{A}$ and $X\nbigCI Y|C',\mathbf{A}$ and $X$ is a descendant of $\mathbf{S}$ in $G_{\bar{\mathbf{A}}}$, then  we show that $X$ can worsen the fairness.
We can observe the following properties about $X$:
\begin{itemize}
    \item $(\mathbf{S}\nbigCI {X}|\mathbf{A})$ implies  there exists a path from $\mathbf{S}$ to ${X}$ that is unblocked given $\mathbf{A}$.
    \item  $(X\nbigCI {Y}|\mathbf{A},C')$ implies that $X$ is predicitve of $Y$ given the features $\mathbf{T}\subseteq \mathbf{C}_1\cup\mathbf{C}_2$. Therefore, there will be a direct edge from $X$ to the learned variable $Y'$. 
\end{itemize} 

If the paths from $\mathbf{S}$ to $X$ are unblocked in  $G_{\bar{\mathbf{A}}}$ then $\mathbf{S}$ to $X$ is unblocked when we intervene on ${\mathbf{A}}$. In this case, the path from $\mathbf{S}\rightarrow X\rightarrow Y'$ is unblocked and therefore $X$ is a biased variable that violates causal fairness of the dataset.}
\end{proof}
\texttt{SeqSel} captures variables that can be identified by performing CI tests. However, the last condition of Theorem~\ref{thm:main} requires intervention to identify other variables. Devising a set of CI tests to identify these variables is an interesting question for future work.
\begin{remark}
\reva{PC-algorithm~\cite{spirtes2000causation}, one of the most popular causal discovery techniques learn the causal graph structure from the data. } However, these techniques are known to work under specific modelling assumptions of the data and are highly inefficient. The number of CI tests required by such techniques is generally exponential in the number of input attributes.
\end{remark}
\noindent \textbf{Complexity:} Algorithm~\ref{alg:fairjoin} tests conditional independence (CI) of each variable with $\mathbf{S}$ and $Y$. In the worst case, it requires $O(2^{|\mathbf{A}|}n)$ CI tests to identify all the variables that do not worsen the fairness of $D$. In most realistic scenarios, $|\mathbf{A}|$ is a small constant, yielding overall complexity of $O(n)$, where $n$ is the number of features. Existing CI testing techniques can generate spurious correlations between independent variables for large values of $n$.    In the next section, we propose a group testing formulation that reduces this complexity to $O(\log n)$ tests, thereby improving its accuracy.

\subsection{Group Testing}
Group testing is an old technique that efficiently performs tests on a logarithmic number of groups of items rather than testing each item separately. It has  not been used in causal inference to identify independent variables. We use graphoid axioms to show the following two results for any collection of variables $\mathbf{X}$ and $Z$ justifying the correctness of group testing in our framework.
\begin{minipage}[t]{0.45\textwidth}
{\small
\begin{algorithm}[H]
   \caption{ \texttt{GrpSel} }
   \label{alg:modified}
\begin{algorithmic}[1]
   \State {\bfseries Input:} Variables $\mathbf{A}, \mathbf{S}, \mathbf{X}, Y$
   \State $\mathbf{C}_1\leftarrow \texttt{first\_phase}((\mathbf{A}, \mathbf{S}, \mathbf{X}_1,Y)$
  \State $\mathbf{C}_2\leftarrow \texttt{final\_candidates}((\mathbf{A}, \mathbf{S}, \mathbf{X}_1,Y, \mathbf{C}_1)$\\
   \Return  $\mathbf{C}_1 \cup \mathbf{C}_2$
   \end{algorithmic}
\end{algorithm}}
\end{minipage}

\begin{minipage}[t]{0.45\textwidth}
\small{
\begin{algorithm}[H]
   \caption{ \texttt{first\_phase} }
   \label{alg:first}
\begin{algorithmic}[1]
   \State {\bfseries Input:} Variables $\mathbf{A}, \mathbf{S}, \mathbf{X}, Y$
   \State $\mathbf{C}_1\leftarrow \phi$
\If {$\exists A\subseteq \mathbf{A} \text{ such that } (\mathbf{X}\bigCI \mathbf{S}|A)$} 
\State $\mathbf{C}_1\leftarrow \mathbf{X}$
\Else
\State $\mathbf{X}_1, \mathbf{X}_2 \leftarrow \texttt{random\_partition}(\mathbf{X})$
\State $\mathbf{C}_1\leftarrow \texttt{first\_phase}(\mathbf{A}, \mathbf{S}, \mathbf{X}_1,Y$) 
\State $\mathbf{C}_1\leftarrow \mathbf{C}_1\cup \texttt{first\_phase}(\mathbf{A}, \mathbf{S}, \mathbf{X}_2,Y$) 
\EndIf\\
   \Return  $\mathbf{C}_1$
   \end{algorithmic}
\end{algorithm}}
\end{minipage}

\begin{minipage}[t]{0.45\textwidth}
{\small
\begin{algorithm}[H]
   \caption{ \texttt{final\_candidates} }
   \label{alg:second}
\begin{algorithmic}[1]
   \State {\bfseries Input:} Variables $\mathbf{A}, \mathbf{S}, \mathbf{X}, Y, \mathbf{C}_1$
   \State $\mathbf{C}_2\leftarrow \phi$
\If {$(\mathbf{X}\bigCI {Y}|\mathbf{A},\mathbf{C}_1)$} 
\State $\mathbf{C}_2\leftarrow \mathbf{X}$
\Else
\State $\mathbf{X}_1, \mathbf{X}_2 \leftarrow \texttt{random\_partition}(\mathbf{X})$
\State $\mathbf{C}_2\leftarrow \texttt{final\_candidates}(\mathbf{A}, \mathbf{S}, \mathbf{X}_1,Y, \mathbf{C}_1$) 
\State $\mathbf{C}_2\leftarrow \mathbf{C}_2\cup \texttt{final\_candidates}(\mathbf{A}, \mathbf{S}, \mathbf{X}_2,Y, \mathbf{C}_2$) 
\EndIf\\
   \Return  $\mathbf{C}_2$
   \end{algorithmic}
\end{algorithm}}
\end{minipage}

\begin{lemma}\label{lem:chain1}
If $\exists X_i\in \mathbf{X}$ such that $X_1\nbigCI X_i|Z$ then  $(X_1\nbigCI \mathbf{X}\setminus\{X_1\}|Z) $ for some variables $X_1$ and $Z$.
\end{lemma}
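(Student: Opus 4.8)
The plan is to recognize this claim as nothing more than the contrapositive of the \textbf{Decomposition axiom} recorded earlier (the first graphoid axiom in Theorem~1~\cite{lauritzen2018unifying}). Under the faithfulness assumption (Assumption~\ref{faithfulness}), conditional independence and d-separation coincide, so the decomposition property is available to us directly, for arbitrary $X_1$ and conditioning set $Z$.

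First I would argue by contraposition. Suppose, toward a contradiction, that $X_1\bigCI \mathbf{X}\setminus\{X_1\}\,|\,Z$. Fix the element $X_i\in\mathbf{X}\setminus\{X_1\}$ furnished by the hypothesis (with $X_i\neq X_1$), and split the joint set as $\mathbf{X}\setminus\{X_1\}=\{X_i\}\cup\mathbf{W}$, where $\mathbf{W}=\mathbf{X}\setminus\{X_1,X_i\}$. Applying the Decomposition axiom with $A=X_1$, $B=X_i$, and $C=\mathbf{W}$ yields $X_1\bigCI X_i\,|\,Z$. This directly contradicts the assumed $X_1\nbigCI X_i\,|\,Z$, so the supposed joint independence cannot hold. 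Hence $X_1\nbigCI \mathbf{X}\setminus\{X_1\}\,|\,Z$, as claimed.

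I expect essentially no substantive obstacle here: the statement is a one-step corollary of decomposition, and the only care needed is the bookkeeping of the set partition (isolating the offending $X_i$ so that the axiom applies verbatim). Conceptually, the value of the lemma lies in its downstream use: it certifies that a \emph{group-level} dependence test faithfully detects individual dependence, since joint independence of a group with $X_1$ forces independence of every member of the group. This is precisely what licenses replacing $|\mathbf{X}|$ separate CI tests with a logarithmic number of group tests in \texttt{GrpSel}, and it is the natural companion to the analogous composition-based direction used elsewhere in the group-testing analysis.
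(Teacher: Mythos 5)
Your proof is correct, but it takes a genuinely different route from the paper's. You argue axiomatically: after splitting $\mathbf{X}\setminus\{X_1\}$ into $\{X_i\}$ and $\mathbf{W}=\mathbf{X}\setminus\{X_1,X_i\}$, the lemma is exactly the contrapositive of the Decomposition axiom that the paper records as its Lemma~1. The paper instead argues information-theoretically: writing $I(X,Y|Z)$ for conditional mutual information, it invokes the chain rule and non-negativity of CMI to get $I(X_1,\mathbf{X}\setminus\{X_1\}|Z) \ge I(X_1,X_i|Z) > 0$, where the strict inequality encodes $X_1\nbigCI X_i|Z$ and the positivity of the group CMI encodes the conclusion. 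Both arguments are sound, and they differ in what they buy. Your version reuses a result the paper has already stated, and in fact it needs less than you claim: decomposition is a semi-graphoid property valid for \emph{every} probability distribution, so faithfulness (Assumption~\ref{faithfulness}) is not needed for this direction at all --- it is the converse statement (Lemma~\ref{lem:chain2}), which rests on the composition axiom, that genuinely requires faithfulness. The paper's route, by contrast, is quantitative: it shows the group-level dependence measure $I(X_1,\mathbf{X}\setminus\{X_1\}|Z)$ is at least as large as the individual one $I(X_1,X_i|Z)$, which matters when the CI oracle is a finite-sample CMI-based test, since the group test's signal can only be stronger than the strongest individual signal --- a point your purely qualitative argument does not deliver.
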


\begin{lemma}\label{lem:chain2}
If    $(X_1\nbigCI \mathbf{X}\setminus X_1|Z) $ then $\exists X_i\in \mathbf{X}\setminus \{X_1\}$ such that $(X_1\nbigCI X_i|Z)$ for some $Z$.
\end{lemma}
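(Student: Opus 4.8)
The plan is to read Lemma~\ref{lem:chain2} as the exact contrapositive of the \emph{composition} axiom, lifted from a single pair of variables to the entire collection $\mathbf{X}\setminus\{X_1\}$ by a short induction. Recall from the graphoid axioms (Theorem~1 of~\cite{lauritzen2018unifying}, which apply here since $\Pr$ is faithful to $G$) that composition merges two conditional independencies that share the same first argument and conditioning set: $A\bigCI B|Z$ and $A\bigCI C|Z$ together imply $A\bigCI B,C|Z$. I would therefore first establish the forward implication
\[
\big(\forall X_i\in\mathbf{X}\setminus\{X_1\}:\ X_1\bigCI X_i|Z\big)\ \Longrightarrow\ X_1\bigCI \mathbf{X}\setminus\{X_1\}\,|\,Z,
\]
and then negate it to obtain the stated claim.

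To prove the forward implication I would induct on $m=|\mathbf{X}\setminus\{X_1\}|$. The base case $m=1$ is immediate, since there the hypothesis is literally the conclusion. For the inductive step, write $\mathbf{X}\setminus\{X_1\}=\{X_2,\ldots,X_{m+1}\}$ and assume $X_1\bigCI X_i|Z$ for every $i$. Applying the inductive hypothesis to $\{X_2,\ldots,X_m\}$ gives $X_1\bigCI \{X_2,\ldots,X_m\}\,|\,Z$, while the hypothesis supplies $X_1\bigCI X_{m+1}|Z$. Invoking the composition axiom with $A=X_1$, $B=\{X_2,\ldots,X_m\}$, and $C=\{X_{m+1}\}$ yields $X_1\bigCI \{X_2,\ldots,X_{m+1}\}\,|\,Z$, which completes the induction.

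The contrapositive of this forward implication is precisely the content of Lemma~\ref{lem:chain2}: if $X_1\nbigCI \mathbf{X}\setminus\{X_1\}\,|\,Z$, then the premise $\forall X_i:\ X_1\bigCI X_i|Z$ cannot hold, so there is some $X_i\in\mathbf{X}\setminus\{X_1\}$ with $X_1\nbigCI X_i|Z$. I expect the only delicate point to be justifying that the composition axiom, stated as a single merge, applies when $B$ and $C$ are arbitrary disjoint \emph{sets} of variables rather than singletons; this is exactly what the induction resolves, and it is legitimate because the axioms invoked here (via d-separation under faithfulness) hold for sets as well. Everything else is routine bookkeeping, and no assumption beyond faithfulness (Assumption~\ref{faithfulness}) is required. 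I would also note in passing that this lemma is the dual of Lemma~\ref{lem:chain1}, which is the analogous contrapositive of the decomposition axiom.
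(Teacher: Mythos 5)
Your proof is correct, but it takes a different route from the paper's. The paper argues directly through d-separation: under faithfulness, $X_1\nbigCI \mathbf{X}\setminus\{X_1\}\,|\,Z$ yields an unblocked path from $X_1$ to the set $\mathbf{X}\setminus\{X_1\}$ given $Z$; any such path terminates at some particular $X_i$, so $X_1$ and $X_i$ are d-connected given $Z$, and the Markov/faithfulness correspondence converts this back into $X_1\nbigCI X_i\,|\,Z$. You instead never touch paths: you observe the lemma is the contrapositive of the composition axiom iterated over the set, and supply the induction that lifts the two-argument axiom (Lemma~1(2) in the paper) to an arbitrary finite collection. The two arguments rest on the same underlying fact --- a path to a union of nodes is a path to one of its elements --- but they package it differently. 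Your version is more modular and rigorous in its bookkeeping: it cleanly isolates exactly which graphoid property is needed and handles the set-valued arguments explicitly, whereas the paper's proof is terse and leaves the step from ``path to the set'' to ``path to some element'' implicit. One caveat worth noting: the paper states the composition axiom but its accompanying proof block only actually proves decomposition, so your argument leans on the citation to \cite{lauritzen2018unifying} (or, equivalently, on the observation that composition for d-separation follows from the same union-of-paths reasoning, transferred to the distribution by faithfulness); your remark that faithfulness is what licenses composition is exactly right, since composition fails for general distributions. Your closing observation that Lemma~\ref{lem:chain1} is dually the contrapositive of decomposition is also correct, although the paper proves that lemma by a chain-rule argument on conditional mutual information rather than via the axiom.
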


These results yield the following two properties that make Algorithm~\ref{alg:fairjoin} more efficient.
\begin{itemize}
    \item If $(X_1\nbigCI X_2,X_3|Z)$ then $X_1\nbigCI X_2|Z$ or $X_1\nbigCI X_3|Z$
    \item If $(X_1\bigCI X_2,X_3|Z)$ then $X_1\bigCI X_2|Z$ and $X_2\bigCI X_3|Z$
\end{itemize}
Algorithm~\ref{alg:modified} presents an improved version of \texttt{SeqSel} that uses group testing to remove all the variables that do not satisfy the CI statements shown in Theorem~\ref{thm:main}. We call this approach \texttt{GrpSel}.  \texttt{GrpSel} operates in two phases, aiming to capture variables $\mathbf{C}_1$ and $\mathbf{C}_2$, respectively. The first phase (Algorithm~\ref{alg:first}) identifies the variables which do not capture any new information about sensitive variables given $A\subseteq \mathbf{A}$. It tests the CI between $\mathbf{S}$ and $\mathbf{X}$ given $A\subseteq\mathbf{A}$. If the variables are conditionally independent, then all the variables $\mathbf{X}$ are identified to maintain causal fairness. On the other hand, if the variables are conditionally dependent, the set $\mathbf{X}$ is partitioned into two equal partitions and \texttt{first\_phase} algorithm is called recursively for both the partitions.  Algorithm~\ref{alg:second}, performs the second phase to  identify the variables which are independent of the target variable $Y$ given $\mathbf{A}$ and $\mathbf{C}_1$. This algorithm operates similarly to \texttt{first\_phase} with a different CI test.

\noindent \textbf{Complexity.} Algorithm~\ref{alg:first} requires a total of $2^{|\mathbf{A}|}k\log n$ tests to identify all variables $X$ that satisfy $(\mathbf{S}\bigCI X|{A})$, where $k$ is the number of variables that do not satisfy the condition. The second phase requires $k'\log k$ tests to identify the variables that satisfy CI with $Y$ where $k'$ is the number of variables that do not satisfy the condition.
Therefore, \texttt{GrpSel} has better complexity when the total number of biased variables $k$ is $o(n/\log n)$.

\begin{figure*}[h]
    \centering
    \includegraphics[width=0.92\textwidth]{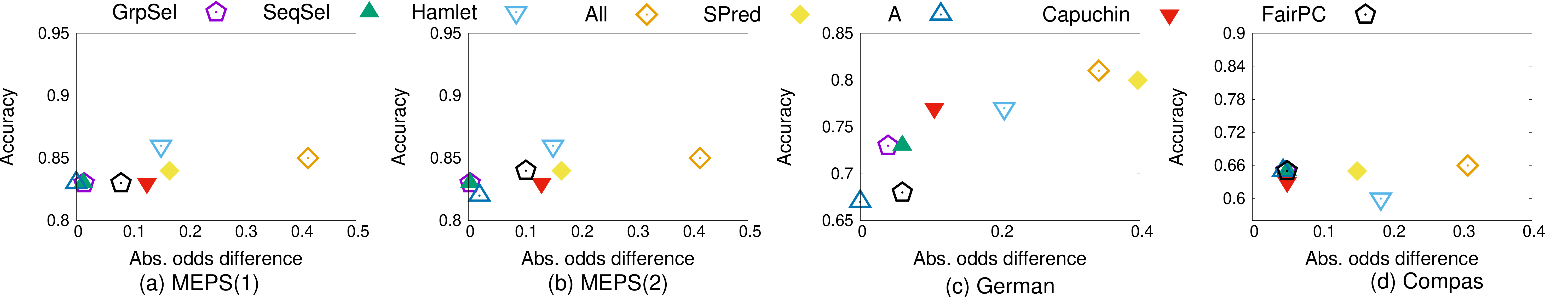}
    \vspace{-4mm}
    \caption{Classifier fairness and accuracy on MEPS, German, and Compas datasets.  \label{fig:quality1}}
    \vspace{-3mm}
\end{figure*}

\section{Experiments}
In this section, we empirically evaluate our technique along with baselines on real-world and synthetic datasets. We answer the following research questions. \textbf{Q1} Are \texttt{SeqSel} and \texttt{GrpSel} able to ensure causal fairness of the trained classifier? \textbf{Q2} How does the quality of classifier trained using different feature selection algorithms compare in terms of fairness and accuracy? \textbf{Q3} Is \texttt{GrpSel} effective in reducing the number of required CI tests?

\subsection{Setup}

\noindent \textbf{Datasets.} We consider the following  datasets.

\begin{itemize}
    \item \textit{Medical Expenditure }
(MEPS)~\cite{meps}: predict total number of hospital visits from patient medical information (Healthcare utilization is sometimes used as a proxy for allocating home care). We consider two variations denoted by MEPS(1) and MEPS(2). MEPS(1) considers `Arthritis diagnosis' as admissible and  MEPS(2) considers `Arthritis diagnosis' and `Mental health' as admissible. Race is considered sensitive. Contains 7915 training and 3100 test records.
\item \textit{German Credit}~\cite{german} 
applications. The account status is considered admissible and person's age is used as a sensitive attribute. Contains 800 training and 200 test records.
\item \textit{Compas}~\cite{compas}
: predict criminal recidivism from features such as the severity of the original crime. \revc{The number of prior convictions, age and severity of charge degree are taken as admissible and race as sensitive.} Contains 7200 samples.
\item \revc{\textit{Adult}~\cite{asuncion2007uci}
: predict income of individuals. Gender is considered sensitive and  hours per week, occupation, age, education are considered admissible. Contains $48k$ individuals.}
\item \textit{Synthetic}: a synthetically constructed dataset where a feature is constructed to be highly correlated to a sensitive feature with probability $p$. This dataset is used for understanding the effect of number of features and the fraction of noisy features on the complexity of our techniques.
\end{itemize}

\textbf{Baselines.} We consider the following baselines to identify a subset of features for the training task. 
\begin{enumerate}
    \item \texttt{A}: uses the variables in the admissible set.
    \item \texttt{ALL}: uses all features present in the dataset. 
    \item \texttt{Hamlet} \citep{kumar2016join}: uses heuristics to identify features which do not add value to the data set and can be ignored. 
    \item  \texttt{SPred}: learn a classifier using an exhaustive set of features to predict the sensitive attribute. Based on feature importance, we remove the highly predictive features.
    \item \revc{\texttt{Capuchin}~\cite{capuchin}: state-of-the-art in-processing technique that ensures causal fairness by adding or removing tuples.}
    \item \revc{\texttt{Fair-PC}: learns the causal graph using PC algorithm~\cite{spirtes2000causation} and uses it to infer features that ensure causal fairness.}
\end{enumerate}

\textbf{Experiment Setup.}
We evaluate accuracy and fairness of the trained classifier on the test set. To evaluate fairness, we measure conditional mutual information (CMI) and absolute odds difference \sainyam{calculated as the difference in false positive rate and true positive rate between the privileged and unprivileged groups}. \sainyam{
We consider the CMI and group fairness metric as a proxy because zero CMI implies causal fairness which further implies group fairness and can be easily evaluated from observed data~\citep{capuchin}.}
We use RCIT~\citep{strobl2019approximate} package in R for CI tests and logistic regression as the classifier.
\begin{figure}
\vspace{-2mm}
\includegraphics[width=0.95\columnwidth]{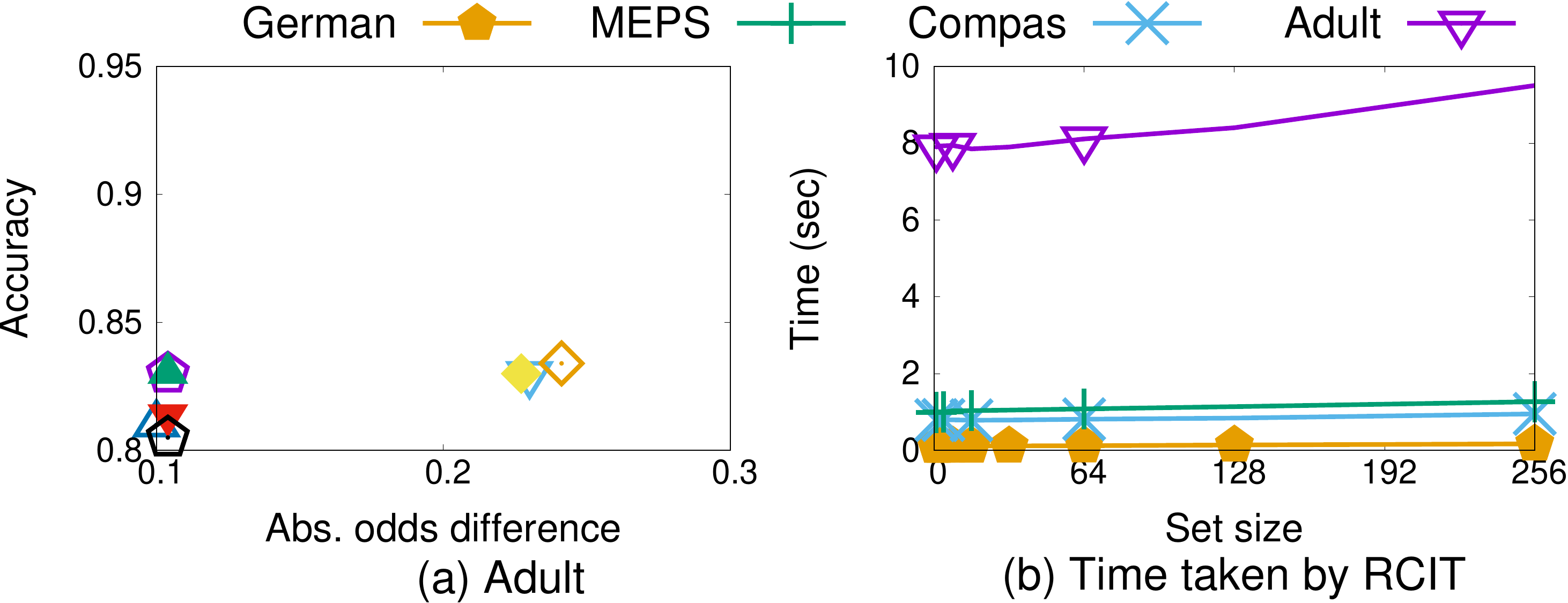}
\vspace{-3mm}
\caption{\revc{(a) Accuracy vs. Abs. odds difference (b) Running time comparison for varying conditioning set size.}}
\vspace{-5mm}
\label{fig:newplot}
\end{figure}

\begin{figure*}
  \begin{minipage}[b]{0.46\textwidth}
  \centering
    \includegraphics[width=0.92\textwidth]{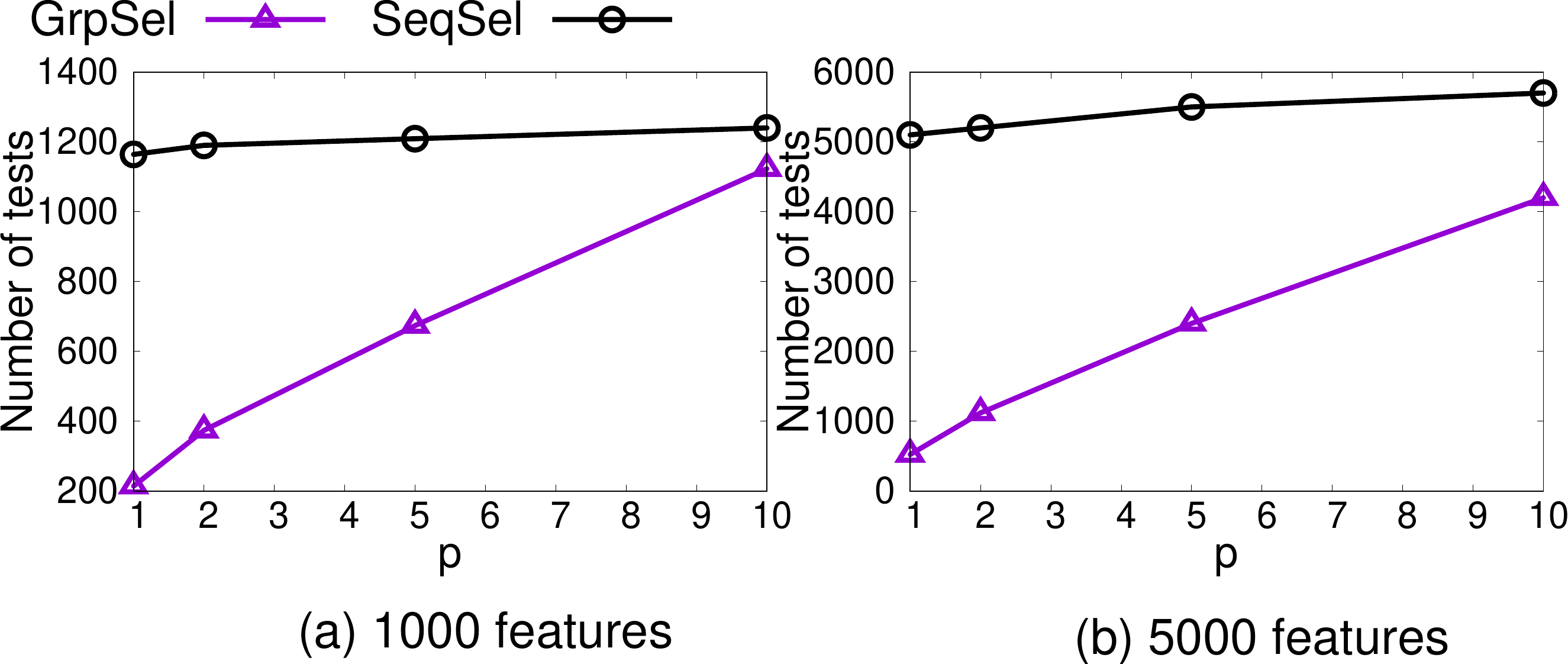}
    \vspace{-4mm}
    \caption{Total number of conditional independence tests vs. $p$, the percentage of biased variables.}
    \label{fig:varyp}
  \end{minipage}~
  \begin{minipage}[b]{0.46\textwidth}
     \includegraphics[width=0.92\textwidth]{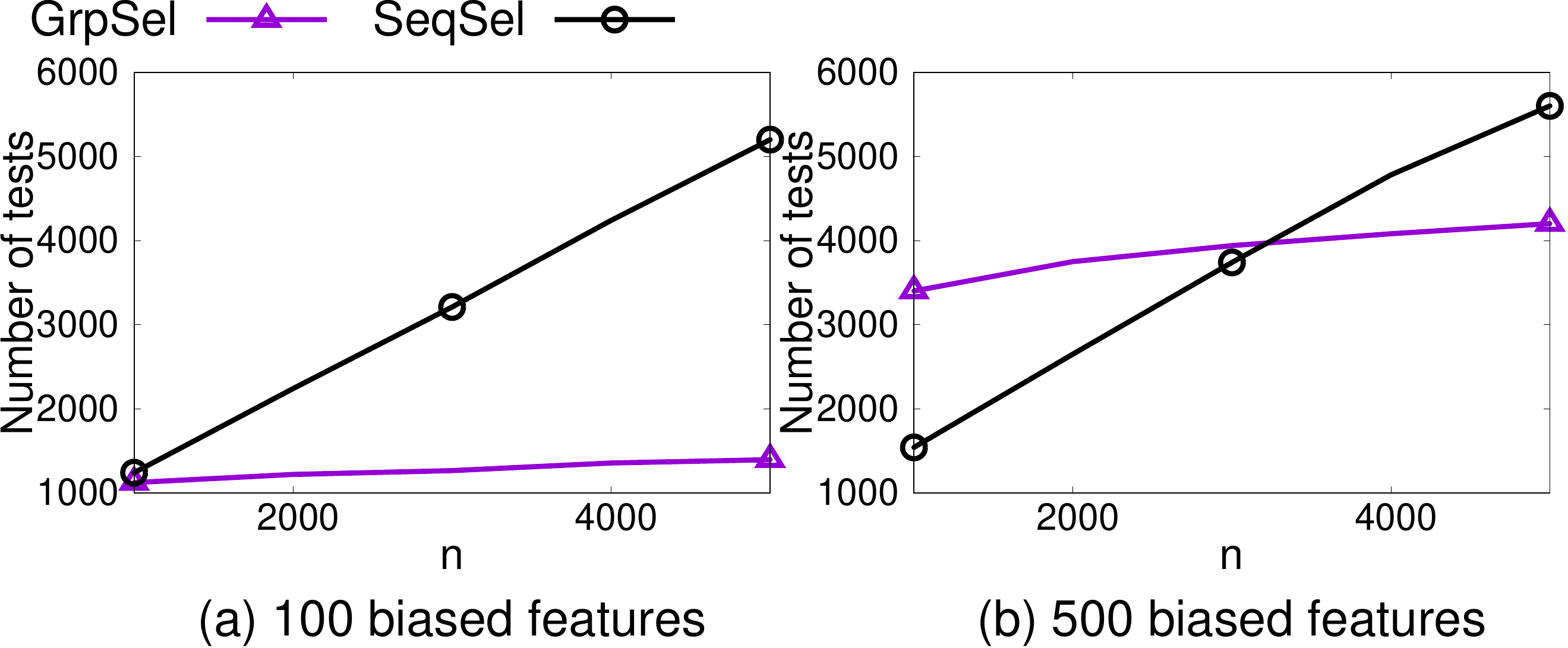}
     \vspace{-4mm}
    \caption{Total number of conditional independence tests vs. $n$ for a fixed number of biased variables.}
    \label{fig:varyn}
  \end{minipage}
  \vspace{-5mm}
\end{figure*}
\subsection{Solution Quality\label{sec:realexp}}
Figure~\ref{fig:quality1} compares the accuracy of the classifier trained with the features identified by our baselines along with its fairness. \texttt{ALL} learns the most accurate classifier as compared to all other techniques. However, it achieves the highest odds difference and hence worst fairness with respect to the sensitive attribute of the dataset. \texttt{A} maintains high fairness but achieves quite low accuracy as compared to \texttt{SeqSel} and \texttt{GrpSel}. \texttt{Hamlet} does not identify features that are highly correlated with sensitive attributes and does not improve its fairness.   \texttt{SPred} identifies a few features that capture sensitive information but is unable to identify all such features. Hence, it does not improve the fairness of the classifier as compared to \texttt{GrpSel}. \revc{\texttt{Capuchin} and \texttt{FairPC} are able to improve fairness as compared to \texttt{ALL}  but performs worse than \texttt{GrpSel} and \texttt{SeqSel}. However, accuracy of the  learnt classifier is lower for \texttt{FairPC} than \texttt{Capuchin}, \texttt{SeqSel}, and \texttt{GrpSel}.  }\texttt{SeqSel} and \texttt{GrpSel} maintain high fairness with respect to various metrics of fairness without much loss in accuracy. \reva{We calculated feature importance of identified attributes and identified that a number of attributes identified in the second phase of  our algorithm have non-zero feature importance and contribute towards classifier prediction.}

For MEPS and German datasets, \texttt{GrpSel} and \texttt{SeqSel} are able to identify features that mitigate the bias and do not lose much in classifier accuracy. However, all other techniques have higher bias against the protected attribute on Compas. \revc{In this case, we observe that the admissible feature is correlated to the sensitive attribute, affecting the fairness of the trained classifier. }
We empirically swept the p-value threshold from 0.01 to 0.05, and results are stable and do not impact its performance. As an example, the accuracy of the trained classifier was 0.83-0.84 on MEPS and within 0.73-0.76 on German on varying the thresholds. 
We observed similar behavior on changing  the classifier from logistic regression to random forest.

Table~\ref{tab:cmi} compares the conditional mutual information between the learnt variable $Y'$ (according to \texttt{GrpSel}) and target $Y$ with $\mathbf{S}$ given $\mathbf{A}$.\footnote{Some mutual information values were slightly negative and were truncated to $0$ as suggested by~\citet{mukherjee2019ccmi}.} Across all datasets, $Y'$ is independent of $\mathbf{S}$ even though the original target variable $Y$ was unfair. This experiment validates the efficacy of our techniques to identify features that ensure fairness and get rid of the biased features.

\begin{minipage}[t]{0.46\textwidth}
\begin{table}[H]
\vspace{-6mm}
\begin{center}
\footnotesize
 \begin{tabular}{|c |c| c|c|c|c|c|} 
 \hline
  &\multicolumn{2}{c|}{CMI}&&&\multicolumn{2}{c|}{Number of tests}\\
  \hline
 Dataset & CMI$(\mathbf{S},Y' | \mathbf{A})$  & CMI$(\mathbf{S},Y | \mathbf{A})$&&&\texttt{SeqSel}&\texttt{GrpSel} \\

 \hline\hline
 MEPS(1)   & 0.0 & 0.015  && MEPS(1) &343 &247 \\ 
 \hline
  MEPS(2) & 0.0 & 0.014 &&MEPS(2) &420&390 \\ 
 \hline
 German   & 0.002 & 0.018 &&German&525 &81  \\
 \hline
 Compas & 0.0 & 0.01 &&Compas&257&83 \\ 
 \hline
 Adult & 0.01 & 0.03 && Adult&125&23 \\
 \hline
\end{tabular}
\end{center}
\caption{Conditional Mutual Information~\citep{mukherjee2019ccmi} and number of CI tests required for each dataset}
\vspace{-5mm}
\label{tab:cmi}
\end{table}
\end{minipage}

\smallskip \noindent \textbf{Model Selection.} We tested these pipelines by training other ML algorithms like random forest and Adaboost classifier. Across all datasets, we observe that \texttt{SeqSel} and \texttt{GrpSel} maintain fairness of the trained classifier while maintaining high accuracy.

\subsection{Synthetic Data}
In this experiment, we tested the causal fairness metric by simulating interventions presented in Definition~\ref{def:causal-fair} and compared with ground truth.
We evaluate \texttt{GrpSel} and \texttt{SeqSel} on multiple synthetic datasets generated using causal graphs of varied sizes (1000, 3000 and 5000). 
Across all datasets, we observed that \texttt{SeqSel} and \texttt{GrpSel} identified majority of the variables that ensure causal fairness. However, other baselines were not able to identify all the biased features, thereby leading to biased datasets. 

\textbf{Complexity.} The total number of CI tests required by  \texttt{SeqSel} and \texttt{GrpSel} are shown in Table~2.  \texttt{GrpSel} requires fewer tests than \texttt{SeqSel} across all datasets. Since all these datasets contain fewer than 1000 features, the improvement is not very significant. 
To understand the difference in complexity of the two techniques, we perform an extensive simulation study  by varying the total number of features and the fraction of biased variables.

Figure~\ref{fig:varyn} compares the total number of CI tests required to identify variables that ensure causal fairness. With the increase in total number of features ($n$), the number of tests required by \texttt{SeqSel} grows linearly. However, the growth of \texttt{GrpSel} is sub-linear and requires fewer tests than \texttt{SeqSel} for larger $n$. This result is coherent with our theoretical analysis of $O(n)$ tests for \texttt{SeqSel} and $O(k\log n)$ for \texttt{GrpSel}, where $k$ is the  number of biased variables and $n$ is the total number of features in the dataset.

\textbf{Effect of $p$.}
Figure~\ref{fig:varyp} compares \texttt{GrpSel} and \texttt{SeqSel} as a function of the total fraction of biased variables in the dataset.  \texttt{SeqSel}'s complexity is driven by the total number of features irrespective of the number of biased features.  However, the tests required by \texttt{GrpSel} are dependent linearly on $p$. This experiment confirms the benefit of using group testing  when the total number of biased variables are fewer than ${(\log n)}/{n}$.

\noindent \textbf{\revc{Advantages of Group-testing\label{sec:advgrp}}} \revc{We now evaluate the benefits of using group-testing based technique for feature selection.  We generated a synthetic dataset containing $1000$ records and increased the number of features (denoted by $t$) from $100$ to $1000$ in increments of $100$. We tested the correctness of \texttt{GrpSel}'s output with the ground-truth calculated from the causal graph. We observed that around $5$ attributes that are independent of $S$ are dropped by \texttt{SeqSel} when $t=500$. The spuriousness increases to $\approx 47$ features when $t=1000$. On the other hand, \texttt{GrpSel} did not return any spurious correlation for $t\leq 900$ features and returned less than $5$ spurious features when $t=1000$. This experiment demonstrates that group-testing can reduce the chances of getting a spurious output.

}
\subsection{Robustness\label{sec:robustness}}
 \revc{ In this experiment, we changed  the test data by modifying the effect of sensitive attribute on the target variable through specific attributes (by changing edge weights of the causal graph). This data distribution shift did not affect the performance of \texttt{GrpSel} or \texttt{SeqSel} and both techniques achieved $0$ absolute odds difference. In contrast, prior pre-processing techniques led to an increase in absolute odds difference of upto $15\%$. The evaluation demonstrated the weakness of pre-processing techniques to generalize to settings beyond the data distribution of the repaired training dataset. Prior work has referred to it as over-fitting with respect to fairness~\cite{diazautomated}.}

\noindent \textbf{Running time. } Figure~\ref{fig:newplot}(b) compares the running time of CI test run using RCIT package for varying size of the conditioning set. This experiment shows that the running time increases linearly with increasing set size but the gradient is very slow. For example, the running time for the adult dataset increases from $8$ sec to less than $10$ sec when the conditioning set size increases from $1$ to $256$. Therefore, performing a CI test with groups of features is effective.

Among all techniques, we observe that \texttt{GrpSel} and \texttt{SeqSel} execute within 10 minutes for all real-world datasets, and it takes around 1 minute to train a classifier. Therefore, our techniques  learns a fair classifier in less than 11 minutes across all datasets. 

\section{Related Work}
To the best of our knowledge, there is very little related work on discrimination-aware or fair feature selection.
One of the recent papers on feature construction and exploration~\cite{diazautomated} has studied the problem of constructing new features that can help improve prediction without affecting fairness.
\citet{GrgricHlacaZGW2018} use human moral judgements of different properties of features (volitionality, reliability, privacy, and relevance) as the starting point for feature selection. Although they cite causal fairness definitions as the basis for feature relevance, they do not use the data to quantify this relevance. \sainyam{\citet{capuchin} consider causal fairness to change the input data distribution as opposed to identification of a small set of features that ensure causal fairness.} \citet{DuttaVMDG2020} start with the causal fairness perspective as well and also use tools from information theory, but use partial information decomposition to partition the information contained in the features into exempt and non-exempt portions; the goal is not feature subset selection, but gaining insight into different types of discrimination.
\citet{nabi2018fair} considered  causal pathways to identify discrimination and then train a fair classifier assuming full knowledge of the underlying causal graph. \citet{zhang2016causal} consider causal definitions of fairness and devise algorithms that repair the dataset to ensure fairness.
\citet{noriega2018active} and its followup \citep{BakkerNTSVP2019} examine an active feature acquisition paradigm from the perspective of fairness but do not study the causal notion of fairness. 

\section{Conclusion}
In this paper, we have tackled the problem of data integration --- joining additional features to an initially given dataset --- while not introducing additional unwanted bias against protected groups. We have utilized the formalism of causal fairness and \texttt{do-calculus} to develop an algorithm for adding variables that is theoretically-guaranteed not to make fairness worse. We have enhanced this algorithm using group testing to make it more efficient (the first use of group testing in such a setting) and shown its efficacy on several datasets. The extension of our techniques for active learning or online setting are interesting questions for future work.

\bibliographystyle{ACM-Reference-Format}
\bibliography{fairjoin}
\pagebreak





\section{ Proofs}
First, we show the following property of \texttt{do-calculus}.
\begin{lemma}
Given a disjoint collection of variables $X$, $Y$ and $Z$ in a causal graph $G$, such that $(X\bigCI Y | Z')$, where $Z'\subseteq Z$, then $Pr[X|do(Y),do(Z)] = Pr[X|do(Z)]$\label{lem:thirdrule}
\end{lemma}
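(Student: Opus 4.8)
The plan is to derive the statement from Rule 3 of Pearl's \texttt{do}-calculus~\cite{pearl2009causality}, after translating the hypothesis into the graphical condition that Rule 3 requires. First I would recall Rule 3 in the form $Pr[X\mid do(Y),do(Z)] = Pr[X\mid do(Z)]$ whenever $(X\bigCI Y\mid Z)$ holds in the mutilated graph $G_{\overline{Y}\,\overline{Z}}$ obtained from $G$ by deleting all edges incoming to $Y$ and to $Z$. Here the deleted-action set is all of $Y$ because the observation set in the rule is empty, so every $Y$-node is vacuously a non-ancestor of it. Thus the entire content of the lemma reduces to verifying the single d-separation statement $(X\bigCI Y\mid Z)$ in $G_{\overline{Y}\,\overline{Z}}$, using only the given $(X\bigCI Y\mid Z')$ in $G$ with $Z'\subseteq Z$ together with the faithfulness assumption (so that conditional independence and d-separation coincide).

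To verify it, I would take an arbitrary path $p$ between $X$ and $Y$ in $G_{\overline{Y}\,\overline{Z}}$ and show it is blocked given $Z$, splitting into two cases. If $p$ passes through some $z\in Z$, then since $z$ has no incoming edges in $G_{\overline{Z}}$ both of $p$'s edges at $z$ are outgoing, making $z$ a non-collider fork; as $z$ lies in the conditioning set, $p$ is blocked there. If instead $p$ avoids every $Z$-node, then its first edge leaves $Y$ (which is parentless in $G_{\overline{Y}}$) and none of its edges are incident to $Z$, so $p$ is also a valid path in $G$ with the same collider structure along it. The key observation is that no collider $w$ on such a $Z$-avoiding path can have a descendant in $Z$ within $G_{\overline{Y}\,\overline{Z}}$: the $Z$-nodes are roots of $G_{\overline{Z}}$ and hence have no proper ancestors, while $w\notin Z$ because $p$ avoids $Z$. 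Consequently an unblocked $p$ (given $Z$) would have to be collider-free, and a collider-free path avoiding $Z$ is unblocked given $Z'\subseteq Z$ in $G$ as well, contradicting $(X\bigCI Y\mid Z')$ in $G$. Hence every such $p$ is blocked, establishing $(X\bigCI Y\mid Z)$ in $G_{\overline{Y}\,\overline{Z}}$.

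Finally I would invoke Rule 3 with this verified condition to conclude $Pr[X\mid do(Y),do(Z)] = Pr[X\mid do(Z)]$. The main obstacle is the middle step: showing that enlarging the conditioning set from $Z'$ to the full $Z$ opens no new active path in the mutilated graph. This could appear worrisome, since conditioning on extra variables generally risks activating colliders, but it is resolved by exploiting the structure of $G_{\overline{Z}}$, where the $Z$-nodes are parentless: on any path they can therefore only appear as conditioned forks (which block) and can never serve as the descendant of a collider (which would unblock). That structural fact is precisely what lets the weaker premise $(X\bigCI Y\mid Z')$ in $G$ imply the seemingly stronger conclusion $(X\bigCI Y\mid Z)$ in $G_{\overline{Y}\,\overline{Z}}$.
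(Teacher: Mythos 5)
Your proposal is correct and takes essentially the same route as the paper: invoke Rule 3 of the \texttt{do}-calculus and verify its d-separation antecedent by exploiting the fact that the $Z$-nodes are parentless in the mutilated graph, so the enlarged conditioning set can only block paths (as conditioned forks) and can never activate a collider. Your write-up is in fact more careful than the paper's terse argument --- it makes the correct mutilated graph $G_{\overline{Y}\,\overline{Z}}$ explicit and carries out the path case analysis that the paper leaves implicit.
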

\begin{proof}
Using the third rule of \texttt{do-calculus} (Equation 10, \citep{huang2012pearl}),
$Pr[X|do(Y),do(Z)] = Pr[X|do(Z)]$ when $X$ is independent of $Y$ given $Z$ in the graph where incoming edges of $Z$ have been removed. Since, $X\bigCI Y | Z'$ in $G$ where $Z'\subseteq Z$, removing additional incoming edges will ensure that none of the variables in $Z$ are a collider and conditioning on $Z\setminus Z'$ additionally will still maintain conditional independence.
\end{proof}


\begin{lemma}
Given a dataset $D$ comprising of variables $\mathcal{A}\cup\mathcal{S}\cup\mathcal{X}$, target variable $Y$ and let $Y'$ be the variable learnt using the feature subset \sainyam{$\mathcal{T}\cup \mathcal{A}$},  then  $Pr(Y'|do(\mathcal{A}),do(\mathcal{S}),\mathcal{T}) = Pr(Y'|do(\mathcal{A}),\mathcal{T})$, where $T\subseteq \mathcal{X}$\label{lem:yprime}
\end{lemma}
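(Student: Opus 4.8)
The plan is to exploit Assumption~\ref{yprime-assumption}, which pins down the generating mechanism of the learnt variable $Y'$ as $P(Y'\mid \mathcal{A}\cup\mathcal{T})$. In the augmented causal graph this means that $Y'$ is a childless sink whose parent set satisfies $\Pa(Y')\subseteq \mathcal{A}\cup\mathcal{T}$, and that the conditional $P(Y'\mid \Pa(Y'))$ is invariant under any intervention that does not target $Y'$ itself. The whole argument then reduces to the observation that conditioning on $\mathcal{T}$ and intervening on $\mathcal{A}$ already fixes \emph{every} parent of $Y'$, so an additional intervention on $\mathcal{S}$ cannot change the law of $Y'$.

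Concretely, first I would write the interventional law of $Y'$ by marginalising over its parents,
\begin{align*}
&Pr(Y'=y'\mid do(\mathcal{A})=a, do(\mathcal{S})=s, \mathcal{T}=t) \\
&\quad = \sum_{\mathbf{p}} P(Y'=y'\mid \Pa(Y')=\mathbf{p})\, Pr(\Pa(Y')=\mathbf{p}\mid do(\mathcal{A})=a, do(\mathcal{S})=s, \mathcal{T}=t),
\end{align*}
which is legitimate because $Y'$ is a sink with invariant mechanism, so it is independent of all other variables (and of the interventions) once its parents are fixed. Next I would argue that the parent distribution on the right is degenerate: writing $\Pa(Y')=\Pa_{\mathcal{A}}\cup\Pa_{\mathcal{T}}$ with $\Pa_{\mathcal{A}}\subseteq\mathcal{A}$ and $\Pa_{\mathcal{T}}\subseteq\mathcal{T}$, the intervention $do(\mathcal{A})=a$ fixes $\Pa_{\mathcal{A}}$ to the corresponding coordinates of $a$ and the conditioning event $\mathcal{T}=t$ fixes $\Pa_{\mathcal{T}}$ to the corresponding coordinates of $t$. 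Hence $Pr(\Pa(Y')=\mathbf{p}\mid do(\mathcal{A})=a, do(\mathcal{S})=s, \mathcal{T}=t)$ is a point mass at a value $\mathbf{p}^{\star}$ determined entirely by $(a,t)$ and \emph{independent of} $s$. Substituting back collapses the sum to $P(Y'=y'\mid \Pa(Y')=\mathbf{p}^{\star})$, an expression with no dependence on $s$; applying the identical computation to the right-hand side (dropping $do(\mathcal{S})$) yields the same quantity, which proves the claim.

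An equivalent route is purely graphical, via the conditioning analogue of Lemma~\ref{lem:thirdrule}: it suffices to check $Y'\bigCI \mathcal{S}\mid \mathcal{A},\mathcal{T}$ in the graph $G_{\bar{\mathcal{A}},\bar{\mathcal{S}}}$. Any path leaving $Y'$ must begin with an edge $Y'\leftarrow P$ into $Y'$ from some $P\in\Pa(Y')\subseteq\mathcal{A}\cup\mathcal{T}$; since that edge points out of $P$, the node $P$ is a non-collider on the path and, being in the conditioning set $\mathcal{A}\cup\mathcal{T}$, blocks it. Thus every such path is blocked and the $d$-separation holds, licensing deletion of $do(\mathcal{S})$.

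I expect the main obstacle to be the careful justification of the first factorisation step, i.e.\ that under $do(\mathcal{A}),do(\mathcal{S})$ together with conditioning on $\mathcal{T}$, the law of $Y'$ still factors through the invariant mechanism $P(Y'\mid \Pa(Y'))$. This is exactly where Assumption~\ref{yprime-assumption} must be invoked, together with the fact that $Y'$ has no descendants, so that interventions on the ancestors $\mathcal{A},\mathcal{S}$ and the observation $\mathcal{T}$ only reshape the distribution over $\Pa(Y')$ and never the mechanism $Y'\mid\Pa(Y')$ itself. Once this is granted, the remaining algebra is routine.
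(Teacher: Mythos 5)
Your proposal is correct, and your primary argument takes a genuinely different, more elementary route than the paper. The paper's proof goes through the third rule of \texttt{do-calculus}: it observes (via Assumption~\ref{yprime-assumption}) that $Y'$ depends only on $\mathcal{A}\cup\mathcal{T}$ in all environments, asserts the resulting independence $Y'\bigCI\mathcal{S}\mid\mathcal{A},\mathcal{T}$ in the relevant mutilated graph, and additionally leans on the side remark that the $\mathcal{S}$ nodes have no incoming edges in order to license deleting $do(\mathcal{S})$. Your main route avoids do-calculus machinery altogether: you expand the interventional law over $\Pa(Y')$, use the sink-plus-invariant-mechanism property from Assumption~\ref{yprime-assumption} to pull out $P(Y'\mid\Pa(Y'))$, and observe that $do(\mathcal{A})=a$ together with conditioning on $\mathcal{T}=t$ collapses the parent distribution to a point mass that does not depend on $s$. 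This buys you two things: the argument is self-contained (only the truncated-factorization semantics of interventions plus the assumption are needed), and you never need the paper's implicit assumption that sensitive attributes are root nodes. Your ``equivalent graphical route'' is essentially the paper's proof, except that you spell out the blocking argument the paper only asserts. One small caveat on that second route: Pearl's rule 3 requires the independence $Y'\bigCI\mathcal{S}\mid\mathcal{A},\mathcal{T}$ to be verified in the graph where incoming edges are removed for $\mathcal{A}$ and only for those $\mathcal{S}$-nodes that are \emph{not} ancestors of $\mathcal{T}$, rather than in $G_{\overline{\mathcal{A}},\overline{\mathcal{S}}}$ as you wrote; this is immaterial here, because your blocking argument (every path leaving the sink $Y'$ begins with an edge from a parent in $\Pa(Y')\subseteq\mathcal{A}\cup\mathcal{T}$, which is a conditioned-on non-collider) holds verbatim in any mutilation that keeps $Y'$ a sink with parents in $\mathcal{A}\cup\mathcal{T}$, which all of these graphs do.
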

\begin{proof}
Based on the assumption about the construction of $Y'$ (Assumption \ref{yprime-assumption}), the variable $Y'$ is only dependent on the variables in $\mathcal{A}\cup \mathcal{T}$ in all environments. Given $\mathcal{A}\cup \mathcal{T}$, the variable $Y'$ is independent of $\mathcal{S}$. The same condition holds even when incoming edges of $\mathcal{A}$ are removed. Also, $\mathcal{S}$ nodes do not have any incoming edges. Therefore, on applying the third rule of \texttt{do-calculus}, since $Y'$ is independent of $\mathcal{S}$ in the modified graph where incoming edges of $\mathcal{A}$ and $\mathcal{S}$ nodes that are ancestors of $\mathcal{T}$ are removed. 
Therefore,
$Pr(Y'|do(\mathcal{A}),do(\mathcal{S}),\mathcal{T}) = Pr(Y'|do(\mathcal{A}),\mathcal{T})$
\end{proof}

\subsection{Proof of Lemma~$7$}
We denote conditional mutual information between two variables $X$ and $Y$ given $Z$ as $I(X,Y|Z)$.
\begin{proof}
Using chain rule, $I(X_1, \mathcal{X}|Z) = I(X_1,X_i|Z) + I(X_1,Z|X_i) \ge I(X_1,X_i|Z) >  0$
\end{proof}

\subsection{Proof of Lemma~$8$}
\begin{proof}
$X_1\nbigCI \mathcal{X}\setminus X_1|Z$ means that path from $X_1$ to $\mathcal{X}$ is not blocked. Using  assumption \ref{faithfulness}, that the path to atleast one of $X_i\in \mathcal{X}\setminus X_1$ is not blocked. Hence, $\exists i$ such that $X_1\nbigCI X_i|Z$.
\end{proof}
\begin{figure}
\includegraphics[width=0.35\textwidth]{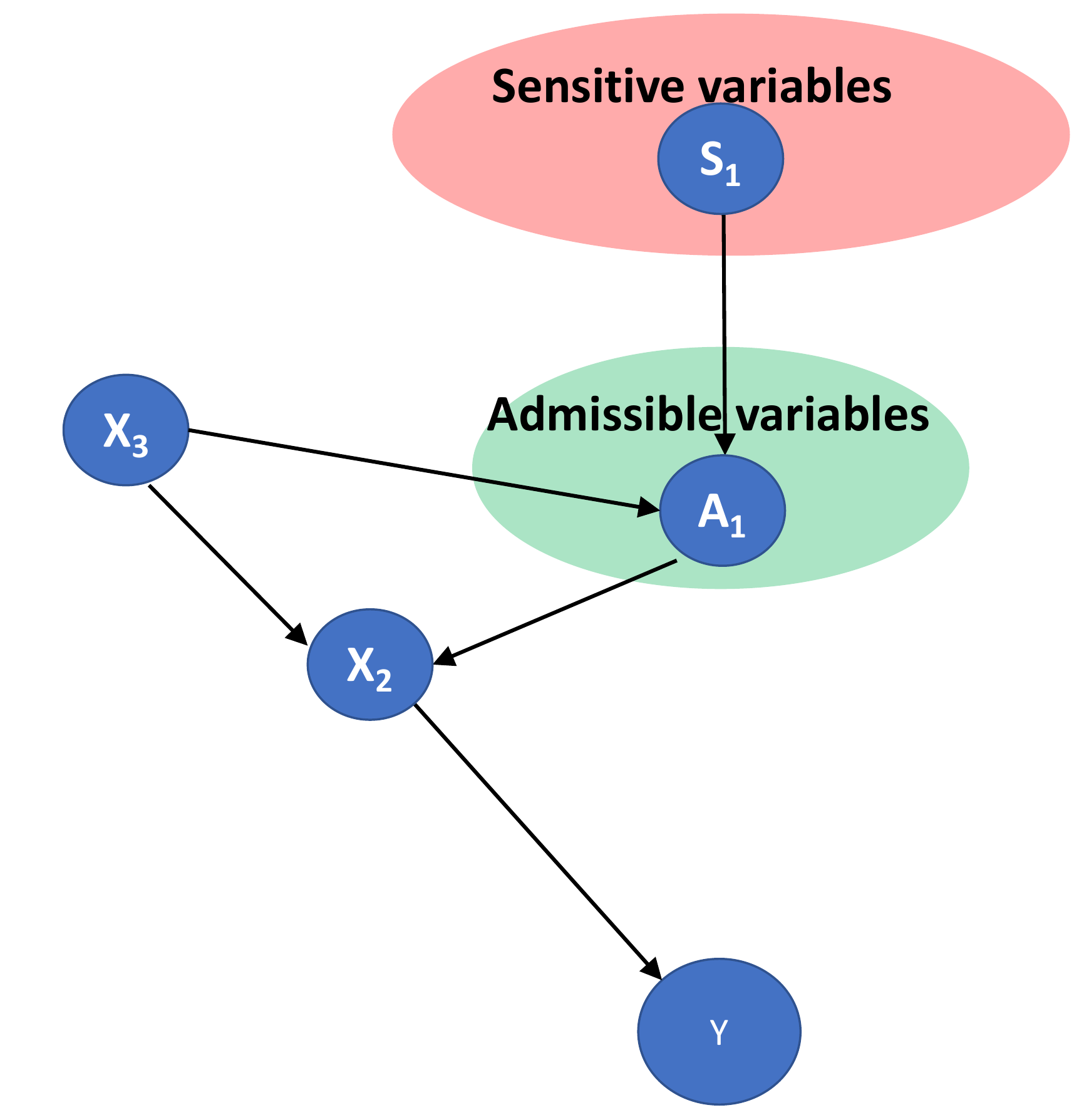}
\caption{Example graph where $X_2$ is not identified as causally fair by \texttt{GrpSel}. We omit other nodes for the sake of clarity.\label{fig:hard}}
\end{figure}

\subsection{Dataset description}
\begin{itemize}
\item \textit{Medical Expenditure} (MEPS)\footnote{\url{https://meps.ahrq.gov/mepsweb/}}: This dataset comprises of health assessment features (both physical and mental) along with demographic features. The dataset is used to predict the number of hospital visits. 
\item \textit{German Credit}\footnote{\url{https://archive.ics.uci.edu/ml/datasets/statlog+(german+credit+data)}} dataset from UCI repository contains attributes of various applicants and the goal is to classify them based on credit risk.
\item \textit{Compas}\footnote{\url{https://github.com/propublica/compas-analysis}} was a risk assessment tool used by courts to determine if a  defendant should be released or retained. This dataset contains features like age, race, prior conviction, etc.
\end{itemize}
In addition to the default set of features, we use techniques from \citep{khurana2016cognito} to generate  new features, constructed by composition of already present features. 

\noindent \textbf{Setup.} We considered the default threshold of p-value to be 0.01 and  default settings of sklearn's logistic regression classifier.  \texttt{GrpSel} and \texttt{SeqSel} were implemented in R and the classifier training and testing in Python. The code was run on a laptop with 16GB RAM running MAC OS. 

\section{Additional Experiments}
Our experiments on real-world datasets that compare group fairness metric (absolute odds difference) and conditional mutual information (CMI) correspond two ends of the spectrum.  Since causal fairness implies group fairness, Figure~\ref{fig:quality1} provides some evidence that our algorithms can potentially ensure fairness.
On the other hand, since \texttt{GrpSel} has low CMI with the target variable given $\mathcal{A}$ (Table~\ref{tab:cmi}), the CMI of $\mathcal{S}$ and $Y'$ will be low even after intervening on $\mathcal{A}$. This experiment guarantees the effectiveness of our techniques to ensure causal fairness.

To further analyze the ability of our algorithms to ensure causal fairness, we evaluate \texttt{GrpSel} and \texttt{SeqSel} on multiple synthetic datasets generated using causal graphs of varied sizes (1000, 3000 and 5000) along with the examples shown in Figure 1 a-c.

In this experiment, we validated the effectiveness of \texttt{SeqSel} and \texttt{GrpSel} to identify the variables that ensure causal fairness.
Across all datasets, we observed that \texttt{SeqSel} and \texttt{GrpSel} identified all the variables that ensure causal fairness. One of the variables in 1000 node dataset was not detected by our algorithm. We show a small subgraph of this dataset in Figure~\ref{fig:hard}. In this dataset, variable $X_2$ is not identified by \texttt{GrpSel} and \texttt{SeqSel} because $X_2\nbigCI S_1$ and $X_2\nbigCI S_1 | A_1$.  This is an example scenario where interventional data is required to identify such variables.

We ran an additional experiment to test the robustness of our techniques with respect to distribution shift. In this experiment, we varied the effect of sensitive attribute on the target variable through specific attributes. This shift in data distribution did not affect the performance of \texttt{GrpSel} or \texttt{SeqSel} but pre-processing techniques like reweighting\footnote{\url{https://aif360.mybluemix.net/}} fail to ensure fairnss under the modified distribution.





\subsection{d-separation}
Two nodes X and Y are d-separated if every path between them (should any exist) is blocked.
If even one path between X and Y is unblocked, X and Y are d-connected. 
More formally, 
\begin{definition}[d-separation]
A path $p$ is blocked by a set of nodes $Z$ if and only if
\begin{enumerate}
    \item p contains a chain of nodes $A \rightarrow B \rightarrow C$ or a fork $A \leftarrow B \rightarrow C$ such that the middle node B
is in Z (i.e., B is conditioned on), or
    \item  p contains a collider $A \rightarrow B \leftarrow C$ such that the collision node B is not in Z, and no descendant of B is in Z.
\end{enumerate}
If Z blocks every path between two nodes X and Y, then X and Y are d-separated, conditional
on Z, and thus are independent conditional on Z.
\end{definition}
\end{document}